\documentclass{article}

\usepackage{microtype}
\usepackage{graphicx}
\usepackage{subcaption}
\usepackage{booktabs} 

\usepackage{hyperref}

\usepackage[accepted]{icml2026}

\usepackage{amsmath}
\usepackage{amssymb}
\usepackage{mathtools}
\usepackage{amsthm}

\usepackage{microtype}
\usepackage{times}
\usepackage{latexsym}
\usepackage{amsfonts}
\usepackage{enumerate}
\usepackage{enumitem}
\usepackage{xspace}
\usepackage{float}
\usepackage{bbm}
\usepackage{dsfont}
\usepackage{bm}
\usepackage{multirow}
\usepackage{framed}
\usepackage{stfloats}
\usepackage{iitem}
\usepackage{makecell}
\usepackage{placeins}
\usepackage{afterpage}
\usepackage{bbding}
\usepackage[normalem]{ulem}
\usepackage{soul}
\usepackage{xcolor}
\usepackage{color}
\usepackage{colortbl}
\usepackage{algorithm}
\usepackage{algorithmic}
\usepackage{array}
\usepackage{listings}
\usepackage{lipsum,mwe,cuted}
\newcommand{\paratitle}[1]{\vspace{1.5ex}\noindent\textbf{#1}}

\newcommand{\ignore}[1]{}

\newcommand{\headercolor}{\rowcolor{gray!15}}
\renewcommand{\algorithmicrequire}{\textbf{Input:}}     
\renewcommand{\algorithmicensure}{\textbf{Output:}}    
\usepackage{thmtools}
\usepackage{thm-restate}

\usepackage[capitalize,noabbrev]{cleveref}

\theoremstyle{plain}
\newtheorem{theorem}{Theorem}[section]
\newtheorem{proposition}[theorem]{Proposition}
\newtheorem{lemma}[theorem]{Lemma}

\theoremstyle{definition}
\newtheorem{definition}[theorem]{Definition}
\newtheorem{assumption}[theorem]{Assumption}
\theoremstyle{remark}
\newtheorem{remark}[theorem]{Remark}

\usepackage[textsize=tiny]{todonotes}

\icmltitlerunning{Towards Pareto-Optimal Tool-Integrated Agents with Pareto Ranking Policy Optimization}

\begin{document}

\twocolumn[
  \icmltitle{Towards Pareto-Optimal Tool-Integrated Agents with Pareto Ranking\\ Policy Optimization}



  \icmlsetsymbol{equal}{*}

  \begin{icmlauthorlist}
    \icmlauthor{Junyi Li}{cityu}
    \icmlauthor{Xiaowei Qian}{cityu}
    \icmlauthor{Yingyi Zhang}{cityu}
    \icmlauthor{Wenlin Zhang}{cityu}
    \icmlauthor{Guojing Li}{cityu}
    \icmlauthor{Sheng Zhang}{cityu}
    \icmlauthor{Xiao Han}{zut}
    \icmlauthor{Yichao Wang}{comp}
    \icmlauthor{Xiangyu Zhao}{cityu}
  \end{icmlauthorlist}

  \icmlaffiliation{cityu}{Department of Data Science, City University of Hong Kong, Hong Kong, China}
  \icmlaffiliation{zut}{Zhejiang University of Technology, Hangzhou, China}
  \icmlaffiliation{comp}{Huawei Technologies Noah's Ark Lab, Hong Kong, China}

  \icmlcorrespondingauthor{Xiangyu Zhao}{xianzhao@cityu.edu.hk}
  \icmlcorrespondingauthor{Yichao Wang}{wangyichao5@huawei.com}


  \vskip 0.3in
]



\printAffiliationsAndNotice{}  

\begin{abstract}
  Recent advances in tool-integrated language agents have significantly improved their ability to solve complex reasoning tasks. However, existing alignment methods predominantly focus on maximizing task accuracy, while overlooking auxiliary objectives such as tool-use efficiency, which are essential for practical deployment. To address this gap, we introduce \textbf{ParetoPO}, a two-stage multi-objective optimization framework for aligning tool-using large language models (LLMs) under competing objectives. In the first stage, ParetoPO leverages hypervolume-guided dynamic scalarization to adapt reward weights based on global Pareto frontier progress. In the second stage, it replaces scalarized learning signals with Pareto-ranking-based advantage computation, promoting nondominated trajectories through dominance-aware credit assignment. This design enables fine-grained, action-level optimization across multiple conflicting objectives. Experimental results on mathematic reasoning and multi-hop QA tasks show that ParetoPO consistently discovers policies with superior accuracy-efficiency trade-offs compared to static and heuristic baselines. Our code is publicly available at \url{https://github.com/Applied-Machine-Learning-Lab/ICML2026_ParetoPO}.
\end{abstract}

\section{Introduction}
\label{introduction}

Online reinforcement learning (RL) has become the de facto approach for aligning tool-augmented large language model (LLM) agents in complex tasks, from question answering with search engines to code generation with compilers~\citep{llm_survey,agent_survey}. This paradigm has led to notable gains in task performance, but current alignment strategies predominantly optimize final-answer accuracy while neglecting auxiliary objectives at the process level~\citep{torl,search-r1,zhang2026process}. In practice, factors like inference efficiency (e.g., number of tool calls) and step-wise decision quality directly impact resource usage and reliability, yet these are often omitted from the optimization process. In this paper, we mainly focus on \emph{multi-objective RL} for LLM agents to not only produce correct solutions but also employ tools with high efficiency.

Existing approaches to multi-objective alignment in LLMs can be grouped into two main categories. (1) Fixed-weight scalarization or heuristic reward mixing: Many studies combine objectives via a weighted sum or ad-hoc reward interpolation using static weights~\citep{otc}. However, such fixed-weight method is fundamentally limited. Different objectives often differ in scale and learning dynamics, i.e., a static weight that is appropriate early in training may later misallocate learning effort. Moreover, static linear scalarization can only recover Pareto-optimal solutions on the convex portions of the trade-off curve, missing solutions in non-convex regions~\citep{dynamic_weighting}. These issues leave fixed-weight methods prone to suboptimal results. (2) Multi-objective RL with gradient-based techniques: Other works pursue more sophisticated multi-objective optimization by computing separate gradients per objective and combining them~\citep{gapo,pama}. While such methods formally accommodate multiple goals, they are often computationally expensive and have mostly been applied to simple settings. Importantly, most prior efforts focus on high-level semantic objectives (e.g., helpfulness, harmlessness) rather than action-level decisions of agents. Thus, balancing multiple behavioral objectives (like tool-use efficiency) alongside accuracy remains an open challenge.

In this work, we propose \textbf{ParetoPO}, a novel multi-objective online RL method for LLM-based tool-using agents that addresses the above limitations. The proposed training framework dynamically adjusts the emphasis on each objective during learning and biases policy updates toward Pareto-optimal behaviors. Specifically, the optimization consists of two stages: (1) \textbf{Hypervolume-guided dynamic scalarization}, which adaptively tunes the reward weight of each objective in real time based on smoothed hypervolume signals, and (2) \textbf{Pareto-ranking-based policy optimization}, which uses non-dominated sorting for ranking a group of trajectories to estimate their advantage values, ensuring that improvements in one objective are not achieved at disproportionate expense of the other.
From a global optimization perspective, the first stage facilitates efficient exploration
of the objective space and guides the policy toward supported Pareto regions.
Building on this coverage, the second stage refines policy updates by directly optimizing
with respect to Pareto dominance, driving the policy toward Pareto-ascent stationary points
where no common first-order improvement direction exists across objectives.

We conduct extensive experiments to evaluate the effectiveness of {ParetoPO} on both mathematical reasoning and multi-hop question answering tasks. The results show that our method achieves a superior trade-off between task performance and tool-use efficiency compared to existing single- and multi-objective baselines.





\section{Background}

\subsection{Multi-Objective Markov Decision Process}

We formulate our agentic tasks in the framework of a \textit{Multi-Objective Markov Decision Process} (MOMDP), which generalizes the standard MDP to multiple reward signals. A MOMDP is defined as a tuple $\langle \mathcal{S}, \mathcal{A}, \mathcal{P}, \mathcal{R}, \bm{\gamma}, \mathcal{D} \rangle$, where: 

$\bullet$~$\mathcal{S}$ is the state space, and $\mathcal{A}$ is the action space;

$\bullet$~$\mathcal{P}(s'|s,a)$ denotes the transition dynamics where $s, s' \in \mathcal{S}$ and $a \in \mathcal{A}$;

$\bullet$~$\mathcal{R} = [r_1, \dots, r_m]^T$ is a vector of $m$ reward functions, each $r_i : \mathcal{S} \times \mathcal{A} \rightarrow \mathbb{R}$;

$\bullet$~$\bm{\gamma} = [\gamma_1, \dots, \gamma_m]^T$ are per-objective discount factors;

$\bullet$~$\mathcal{D}$ is the initial state distribution.

The goal is to learn a stochastic policy $\pi_\theta(a|s)$ that maximizes a vector of expected returns:
\begin{equation}
    \bm{J}^\pi = [J^\pi_1, \dots, J^\pi_m]^T, ~~ \text{where}~~
    J^\pi_i = \mathbb{E}_{\pi, \mathcal{P}}\left[\sum_{t=0}^T \gamma_i^t r_i(s_t, a_t)\right]. \nonumber
\end{equation}
Unlike standard RL, no single policy typically maximizes all objectives simultaneously, motivating the use of multi-objective optimization.

\subsection{Multi-Objective Optimization}

In multi-objective learning, we aim to optimize the vector-valued objective:
\begin{equation}
    \max_\pi \bm{F}(\pi) = [f_1(\pi), f_2(\pi), \dots, f_m(\pi)], 
\end{equation}
where each $f_i(\pi) = J^\pi_i$ corresponds to an expected return under objective $i$. As objectives may conflict, the solution is characterized by the Pareto front.

\begin{definition}
\emph{(Pareto Optimality)} A policy $\pi$ is said to \textit{dominate} another policy $\pi'$ if:
\begin{equation}
    \bm{F}(\pi) \succeq \bm{F}(\pi') \quad \text{and} \quad \bm{F}(\pi) \ne \bm{F}(\pi').
\end{equation}
A policy is \textit{Pareto-optimal} if it is not dominated by any other policy. The set of such policies forms the \textit{Pareto set}, and their image in objective space is the \textit{Pareto front}.
\end{definition}

In multi-objective learning, the true Pareto-optimal set is typically intractable due to the complexity of the objective space. Instead, optimization methods aim to approximate this front with a finite set of nondominated solutions. To evaluate such approximations, the {hypervolume} metric~\citep{zitzler1998multiobjective} is widely used, as it jointly reflects both convergence and diversity by measuring the volume dominated by the solution set relative to a reference point.


\begin{definition}
\emph{(Hypervolume)} The hypervolume (HV) is a standard quality indicator of Pareto approximations. Given a set of non-dominated outcomes $P$ and a reference point $\bm{r}$, the hypervolume is defined as the dominated volume:
\begin{equation}
    \mathrm{HV}(P) = \int_{\mathbb{R}^m} \mathds{1}_{\bm{r} \preceq \bm{z} \preceq \bm{p},~ \exists \bm{p} \in P} \, d\bm{z}, 
\end{equation}
where $\preceq$ is the relation operator of
objective dominance. A larger HV implies better coverage and spread along the Pareto frontier~\citep{zitzler1998multiobjective}. 
\end{definition}

\subsection{Problem Formulation}

We formulate the training of a tool-using LLM agent as a multi-objective MDP. At each time step, the agent chooses an action which could be either a tool API call or a direct output token. Upon episode completion, the agent receives a vector reward $\bm{r}_\theta = (r_{task}, r_{tool})$: in our case, $r_{task}$ measures task performance (e.g., accuracy), and $r_{tool}$ reflects tool-use efficiency, which is defined as:
\begin{equation}\label{eq:tool-score}
    r_{tool} = \exp(-\alpha \left\vert N_{call} - N_{optimal} \right\vert),
\end{equation}
where $N_{call}$ denotes the actual number of tool calls in a given trajectory, $N_{optimal}$ is the estimated optimal number of calls for a task $q$, and $\alpha$ is a hyper-parameter controlling the penalty severity. Note that we calculate the minimal tool calls $N_{optimal}=\min(\mathcal{C})$, where $\mathcal{C} = \{c_1,c_2,...,c_k\}$ is the number of tool calls only for current successful trajectories $\{\tau_1,\tau_2,...,\tau_k\}$ and $N_{call} \geq N_{optimal}$. 
This way serves as the local approximation of optimal tool calls for that task~\citep{otc}. 
Moreover, we track and update $N_{optimal}$ during multi-epoch training to approximate global optimal tool calls if the agent can find a better solution in the later training.
The agent's objective is to maximize both $r_{task}$ and $r_{tool}$ over its policy $\pi_\theta$. We call an outcome $(r_{task}, r_{tool})$ \emph{Pareto-optimal} if no other policy achieves higher $r_{task}$ and higher $r_{tool}$ simultaneously.

\section{Approach}
\label{approach}


In this section, we present our main contribution \textbf{ParetoPO}, a two-stage reinforcement learning framework for training tool-integrated agents under multiple conflicting objectives. This approach combines dynamic scalarization with dominance-aware policy optimization to encourage exploration of the Pareto front and enable fine-grained behavior refinement without fixed reward weights. Algorithm~\ref{alg:paretopo} presents the overview of our method. The full details and theoretical analysis are given in Sections~\ref{sec-scalarization}-\ref{sec-theoretical-analysis}.

\subsection{Hypervolume-Guided Dynamic Scalarization}
\label{sec-scalarization}

Previous work~\cite{yao2025training} maintains a static weight vector $\bm{w} = (w_1, w_2)$ for scalarizing the vector reward into a scalar reward as $r_w = \bm{w}^\top \bm{r}$. To address their limitations, we adopt a hypervolume-guided dynamic scalarization method, which adaptively adjusts the effective reward weights based on smoothed hypervolume signal. 


\paratitle{Meta-Level Reward Weight Adaptation.} 
Given the current Pareto set $\bm{B} = \{\bm{r}_{\theta_0},...,\bm{r}_{\theta_{t-1}}\}$ achieved so far and a new outcome vector $\bm{r}$, we calculate the \emph{hypervolume contribution} for this outcome as $\Delta \mathrm{HV}(\bm{r}, \bm{B}) = \mathrm{HV}(\bm{B} \cup {\bm{r}}) - \mathrm{HV}(\bm{B})$, using the recursive dimension sweep algorithm in \citet{fonseca2006improved}.
In prior work~\citep{dynamic_weighting}, they do not directly adjust the reward weights $\bm{w}$ but instead adopt a meta-level reward $r_{pareto}$ that relies on detecting immediate positive hypervolume contributions $\Delta \mathrm{HV}$ as a trigger for reward amplification. However, in tool-use scenarios, outcomes can be noisy or highly variable, e.g., using a tool might occasionally produce a big jump in one objective but at the cost of others, or there may be stochastic tool failures. A naive hypervolume signal might be unstable, either over-emphasizing one lucky improvement or providing zero reward during noisy phases. Therefore, we adopt \textbf{smoothed hypervolume gain} as metric at the $t$-th training step as:
\begin{equation}\label{eq-hypervolume-gain}
    \Delta \overline{\text{HV}}_t = \gamma \Delta \overline{\text{HV}}_{t-1} + (1-\gamma)\Delta {\text{HV}}_t ,
\end{equation}
where $\gamma$ is a smoothing factor. This smoothing dampens the effect of outlier episodes and provides a more stable learning signal. Thus, the meta-level reward $r_{pareto}$ is defined as a monotonic function of the smoothed gain and the final reward is adjusted via $r_{pareto}$:
\begin{eqnarray}\small
    r_{pareto}(\bm{r}, \bm{B}) &=& 0.5+1.5\tanh(\Delta \overline{\text{HV}}_t(\bm{r}, \bm{B})), \nonumber \\
    \tilde{r}_w &=& r_{pareto} \cdot r_w. \label{eq-meta-reward}
\end{eqnarray}
Importantly, since $r_{pareto}$ depends on the current Pareto archive $\bm{B}$,
which evolves as new non-dominated outcomes are discovered, the resulting scalarized return $\tilde r_w = r_{pareto}\cdot \bm{w}^\top\bm{r}$
induces a time-varying effective objective.
This mechanism implicitly reweights trade-offs among objectives,
encouraging exploration of under-represented regions of the Pareto front.

\paratitle{Iterative Weights Adaptation.} 
At each training step $t$, we update the policy $\pi_\theta$ using a batch of trajectories and re-evaluate its performance on a held-out validation set. Given a training batch $\mathcal{D}_b$, the agent samples $g$ responses for each query $q \in \mathcal{D}_b$ under the current policy $\pi_{\theta_{t-1}}$. Each trajectory $\tau_i$ is first assigned a vector reward $\bm{r} = (r_{task}, r_{tool})$. We then compute the scalarized reward $r_w = \bm{w}^\top \bm{r}$ and rescale it via the meta-level amplification term defined in Eq.~\ref{eq-meta-reward}. The policy parameters are then optimized using these augmented scalar rewards via GRPO~\citep{deepseek-math}. After the update, the Pareto set $\bm{B}$ (the archive of best-so-far objective vectors) is updated to include any newly discovered non-dominated outcomes. 


\subsection{Pareto Ranking Policy Optimization}
\label{sec-pareto-ranking}

Even with dynamic scalarization, the policy update in standard RL still relies on scalar returns that may bias learning toward \emph{a particular weight setting}. To more directly align the policy optimization with multi-objective trade-offs, we propose a Pareto ranking based advantage computation mechanism. The key idea is to compute advantage values by comparing each trajectory's outcome against others in a batch using Pareto dominance.

\paratitle{Pareto Ranking for Trajectory Evaluation.}
We leverage the concept of Pareto dominance to evaluate and rank trajectories in a multi-objective setting. Given a group of rollout trajectories, a trajectory $\tau_i$ \emph{dominates} another trajectory $\tau_j$ if $\tau_i$ is no worse than $\tau_j$ on all objectives and strictly better on at least one. A trajectory is \textbf{Pareto optimal} (nondominated) within the batch if no other trajectory dominates it. Using this criterion, we perform non-dominated sorting to assign each trajectory a Pareto rank $\rho$, where rank 1 contains all nondominated trajectories, rank 2 contains those dominated only by rank-1 trajectories, and so on. This Pareto ranking provides an ordinal assessment of each trajectory’s quality in the multi-objective sense.

\paratitle{Advantage Assignment via Pareto Rank.} Once trajectories are ranked, we compute their advantage values based on their ranks. We define a base advantage for rank $\rho$ as:
\begin{equation}
    A_{base, \rho} = N_{rank}-\rho+1,
\end{equation}
where $N_{rank}$ is the total number of ranks. To reflect varying preference over objectives across tasks, we further refine advantages within each rank by incorporating the trajectory's outcome $\bm{r}= (r_{task}, r_{tool})$ and the user-defined weight vector $\bm{w}=(w_1,w_2)$. Specifically, we first define a normalized reward for trajectory $\tau_i$ in rank $\rho$ as:
\begin{equation}
    \hat{r}_{w} = \frac{r_{w} - r_{min}}{r_{max}-r_{min}} ,
\end{equation}
where $r_w = \bm{w}^\top \bm{r}$, and $r_{min}, r_{max}$ are the minimum and maximum rewards within the rank $\rho$, respectively. When $r_{min} = r_{max}$, $\hat{r}_{w}=0.5$. Finally, we adjust the trajectory's advantage $A_i$ based on this normalized score as:
\begin{equation}\label{eq-trajectory-advantage}
    A_i = A_{base,\rho} + \beta \cdot (\hat{r}_{w} - 0.5),
\end{equation}
where $\beta \leq 1$ is a scaling factor. Notably, this formulation guarantees that no trajectory from a worse rank can receive a higher advantage than any trajectory from a better rank.
This step injects the preference priorities into the advantage values, so that those trajectories that perform better on the higher-priority objective (according to the weights) get a slight boost in advantage. Based on the refined advantage, we adopt GRPO to update the policy.

\paratitle{Discussion of Training Stability and Overhead.} The efficiency reward $r_{tool}$ depends on a moving $N_{optimal}$, which may introduce some non-stationarity. However, $N_{optimal}$ is defined as the minimum tool count among successful trajectories, so it is monotonically non-increasing over training. Unlike a noisy reward baseline, it changes only when a more efficient successful trajectory is found, which makes its dynamics stable. In addition, the efficiency reward is bounded in $[0,1]$, preventing unstable reward magnitudes. Besides, as training progresses and the policy improves, discovering trajectories with even fewer tool calls becomes increasingly rare. As a result, $N_{optimal}$ quickly stabilizes and the optimization becomes approximately stationary in later stages. In our approach, Pareto ranking is performed within a small group of sampled rollouts for each query. Therefore, even if the proportion of non-dominated trajectories increases with more objectives, this effect is bounded within a very small candidate set and it does not lead to an uncontrolled solution explosion in ranking cost.


\subsection{Theoretical Analysis}
\label{sec-theoretical-analysis}

\begin{figure}[t]
    \centering
    \small
    \includegraphics[width=0.48\textwidth]{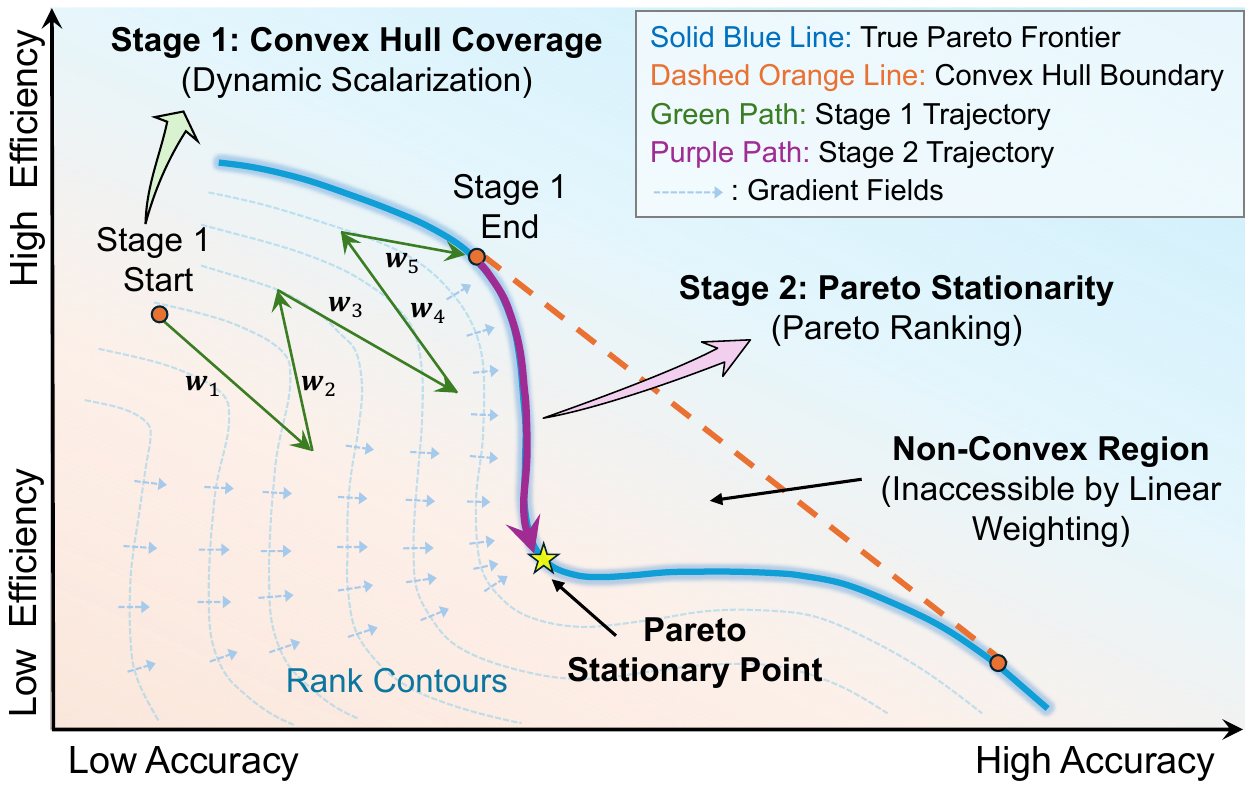}
    \caption{Two-stage multi-objective RL: Accuracy vs. Efficiency.
    } 
    \label{fig:paretopo}
    \vspace{-0.3cm}
\end{figure}

Figure~\ref{fig:paretopo} illustrates trajectories of the two stages in multi-objective optimization.
In our method, Stage 1 adopts a dynamic scalarization strategy to generate a diverse
set of policies with different trade-offs among objectives. By repeatedly optimizing scalarized objectives and retaining non-dominated outcomes,
Stage~1 expands the coverage of achievable returns in the objective space, providing
a global approximation of the Pareto front (stated in Proposition~\ref{prop-convex}).

\begin{restatable}
[\textbf{Supported-Hull Coverage of Stage 1}]{proposition}{convexhullProp}
\label{prop-convex}
Let $\mathcal{Y}=\{\bm{J}(\pi):\pi\in\Pi\}$ denote the set of achievable expected return vectors,
and let $\mathcal{S}_T=\{\bm{J}(\pi_t)\}_{t=1}^T$ be the set of evaluation vectors generated in Stage~1.
Assume that dynamic scalarization explores preference directions densely
over the simplex and approximately optimizes each visited scalarization.
Then the discovered hull $\mathcal{C}_T=\mathrm{conv}(\mathcal{S}_T)$ converges to the achievable hull
$\mathcal{C}=\mathrm{conv}(\mathcal{Y})$ \emph{in support-function distance over the simplex}:
\begin{equation}
\label{eq:support_conv_simplex}
\sup_{\bm{w}\in\Delta^M}\bigl|h_{\mathcal{C}}(\bm{w})-h_{\mathcal{C}_T}(\bm{w})\bigr|
\;\to\;0
\qquad (T\to\infty),
\end{equation}
where $h_{\mathcal{C}}(\bm{w})=\sup_{\bm{y}\in\mathcal{C}} \bm{w}^\top \bm{y}$.
Specifically, Stage~1 asymptotically covers all {supported} Pareto-optimal points,
i.e., all points $\bm{y}^*\in\mathcal{Y}$ that maximize $\bm{w}^\top \bm{y}$ for some $\bm{w}\in\Delta^M$.
\end{restatable}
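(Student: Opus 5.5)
Since $\mathcal{S}_T\subseteq\mathcal{Y}$, we have $\mathcal{C}_T\subseteq\mathcal{C}$, so $h_{\mathcal{C}_T}(\bm{w})\le h_{\mathcal{C}}(\bm{w})$ for every $\bm{w}$. It therefore suffices to show that
\[
\delta_T(\bm{w}) := h_{\mathcal{C}}(\bm{w})-h_{\mathcal{C}_T}(\bm{w}) \ge 0
\]
tends to zero uniformly on $\Delta^M$. We first use that the support function of a convex hull equals the support function of the set itself: $h_{\mathcal{C}}(\bm{w})=\sup_{\bm{y}\in\mathcal{Y}}\bm{w}^\top\bm{y}$ and $h_{\mathcal{C}_T}(\bm{w})=\max_{t\le T}\bm{w}^\top\bm{J}(\pi_t)$. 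We also make the standing boundedness explicit. Rewards are bounded ($r_{task},r_{tool}\in[0,1]$, hence $\|\bm{J}\|_\infty\le R$), so both support functions are $R\sqrt{M}$-Lipschitz in $\bm{w}$. This uniform Lipschitz bound converts pointwise control into uniform control.

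We then formalize the two hypotheses. \emph{Density:} for every $\varepsilon>0$ there is $T_0(\varepsilon)$ such that the visited weights $\{\bm{w}_t\}_{t\le T_0}$ form an $\varepsilon$-net of $\Delta^M$. \emph{Approximate optimization:} $\bm{w}_t^\top\bm{J}(\pi_t)\ge h_{\mathcal{C}}(\bm{w}_t)-\epsilon_t$ with $\epsilon_t\to0$, or with $\epsilon_t\le\epsilon_{opt}$, which then yields convergence up to $\epsilon_{opt}$. For the dynamic scheme, the effective weight is the product of $r_{pareto}$ and $\bm{w}$, normalized onto the simplex; the positive factor $r_{pareto}$ does not change the maximizer, so each step optimizes a genuine simplex direction. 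To handle the vanishing tolerances, fix $\varepsilon$ and take $T$ large enough that the weights visited after time $T_1$ (where $\epsilon_t<\varepsilon$ for $t>T_1$) already form an $\varepsilon$-net. Density must therefore hold for tails of the sequence, which is the natural reading of ``explores densely.''

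The key estimate is as follows. For any $\bm{w}\in\Delta^M$, pick a visited $\bm{w}_t$ with $\|\bm{w}-\bm{w}_t\|\le\varepsilon$. Then
\[
h_{\mathcal{C}_T}(\bm{w})\ge \bm{w}^\top\bm{J}(\pi_t)\ge \bm{w}_t^\top\bm{J}(\pi_t)-R\sqrt{M}\varepsilon \ge h_{\mathcal{C}}(\bm{w}_t)-\epsilon_t-R\sqrt{M}\varepsilon \ge h_{\mathcal{C}}(\bm{w})-\epsilon_t-2R\sqrt{M}\varepsilon.
\]
Hence $\sup_{\bm{w}}\delta_T(\bm{w})\le(2R\sqrt{M}+1)\varepsilon$ for large $T$, which proves \eqref{eq:support_conv_simplex}.

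For the ``specifically'' clause, let $\bm{y}^*$ maximize $\bm{w}^{*\top}\bm{y}$. Uniform support convergence implies that $\mathcal{C}_T$ contains points whose $\bm{w}^*$-value is within $o(1)$ of $\bm{w}^{*\top}\bm{y}^*$. If $\bm{y}^*$ is the unique maximizer, a compactness argument (take $\mathcal{Y}$ closed, or pass to its closure) shows that the approximating near-maximizers $\bm{J}(\pi_t)$ along directions $\bm{w}_t\to\bm{w}^*$ have limit points in the face $\arg\max_{\mathcal{C}}\bm{w}^{*\top}\bm{y}$, and hence converge to $\bm{y}^*$. If the face is non-singleton, we only obtain coverage of the face, i.e.\ of its extreme points, in the Hausdorff sense. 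We would state this caveat or restrict to exposed points.

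The main obstacle is this last step together with the modeling of the hypotheses. Support-function convergence does not by itself force convergence to each individual supported point lying on a flat face. The dynamic weights, being driven by $\Delta\overline{\mathrm{HV}}$, are not obviously dense. We would therefore make density an explicit assumption, as the statement does. For supported points we would first try to prove the exposed-point case via a compactness/upper-semicontinuity argument for $\arg\max$, and then interpret ``covers'' for general supported points as membership in the closure of $\mathcal{C}_T$'s limit.
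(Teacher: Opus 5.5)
Your proof of the support-function bound is correct, but it is organized differently from the paper's. The paper builds uniformity directly into its hypothesis. Its Preference Coverage assumption says that for some $\varepsilon_T\to0$, \emph{every} $\bm w\in\Delta^M$ has a $t\le T$ with $\bm w^\top\bm J(\pi_t)\ge\sup_{\pi}\bm w^\top\bm J(\pi)-\varepsilon_T$. After that, the proof is your inclusion $\mathcal C_T\subseteq\mathcal C$ plus a single inequality, giving $\sup_{\bm w\in\Delta^M}|h_{\mathcal C}(\bm w)-h_{\mathcal C_T}(\bm w)|\le\varepsilon_T$. Since $h_{\mathcal C_T}(\bm w)=\max_{t\le T}\bm w^\top\bm J(\pi_t)$, that assumption is literally equivalent to the conclusion.

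You instead take the statement's informal hypothesis at face value: a tail-dense sequence of visited directions, each approximately optimized only at its own $\bm w_t$. You then manufacture the uniformity from a finite $\varepsilon$-net and the $R\sqrt M$-Lipschitz continuity of support functions. The cost is a boundedness assumption (harmless here) and density of tails. Your version therefore derives the paper's assumption from more primitive ones; the paper's is shorter and carries an explicit rate.

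Your remark on $r_{pareto}$ cuts deeper than you say. In the paper's algorithm it is one positive scalar per step multiplying $\bm w^\top\bm r$, so the normalized direction is always the fixed $\bm w$. Hence in neither treatment can density come from the mechanism itself.

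On the ``specifically'' clause you are more careful than the paper, whose Step~3 is informal. The paper's added claim of Hausdorff convergence of $\mathcal C_T$ also does not follow, since only directions in $\Delta^M$ are controlled. Your exposed-point argument is correct, but your flat-face caveat is too pessimistic.

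The $\mathcal C_T$ are nested, so a point is covered iff it lies in $\mathcal C_\infty=\overline{\bigcup_T\mathcal C_T}$. Uniform convergence gives $h_{\mathcal C_\infty}=h_{\bar{\mathcal C}}$ on $\Delta^M$, hence on $\mathbb R^M_{\ge0}$ by homogeneity. So the closed convex sets $\mathcal C_\infty-\mathbb R^M_{\ge0}$ and $\bar{\mathcal C}-\mathbb R^M_{\ge0}$ have the same support function and coincide. Now write any $\bm y^*\in\bar{\mathcal C}$ as $\bm y'-\bm d$ with $\bm y'\in\mathcal C_\infty$ and $\bm d\in\mathbb R^M_{\ge0}$. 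This shows that every point nondominated in $\bar{\mathcal C}$ is covered, including every maximizer of a strictly positive $\bm w$, exposed or not.

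What can genuinely fail is coverage of points that are dominated in $\bar{\mathcal C}$ yet maximize some boundary weight. For example, $\mathcal Y=[0,1]^2$ with $\mathcal S_T=\{(1,1)\}$ meets all hypotheses but never covers $(1,0)$, which maximizes $\bm w=(1,0)$. So the clause as literally phrased is false. The right restriction is to points Pareto-optimal in $\bar{\mathcal C}$, rather than to exposed points.
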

\begin{proof}
Appendix~\ref{app:stage1_convex_hull}.
\end{proof}

To analyze Stage 2, the main difficulty is that Pareto ranks obtained by non-dominated sorting are discrete and non-differentiable. Thus, we introduce a stochastic smoothing of Pareto ranks by adding i.i.d. Gumbel noise. 



\begin{definition}[\textbf{Stochastic Pareto Rank (Order-Consistent Surrogate)}]
For each trajectory $\tau$, let $\tilde{\bm G}(\tau)=\bm G(\tau)+\xi$ where
$\xi_m\sim \mathrm{Gumbel}(0,\sigma)$ i.i.d.
For two trajectories $\tau_i,\tau_j$, define the stochastic dominance probability
\[
p_\sigma(\tau_i \succ \tau_j)
\triangleq
\mathbb P\!\left(\tilde{\bm G}(\tau_i)\succeq \tilde{\bm G}(\tau_j)\right),
\]
where the probability is taken over the Gumbel noise.
We define the stochastic Pareto rank surrogate of $\tau_i$ (lower is better) as $R_\sigma(\tau_i)
\triangleq
1+\sum_{j\neq i} p_\sigma(\tau_j \succ \tau_i)$. For any fixed $\sigma>0$, $R_\sigma(\tau_i)$ is bounded and is a continuous
(almost-everywhere differentiable) function of the perturbed returns.
As $\sigma\to 0$, the surrogate becomes increasingly sharp and is
order-consistent with Pareto dominance: if $\tau_i \succ_P \tau_j$, then
$R_\sigma(\tau_i) < R_\sigma(\tau_j)$ with probability approaching $1$.
\end{definition}


\begin{assumption}[\textbf{Monotonicity of Advantage}]\label{assumption-rank}
The shaping function $\Phi(r,\hat r_w)$ is monotone decreasing in its rank argument $r$.
In particular, if $\tau_i \succ_P \tau_j$, then $\Phi(R_\sigma(\tau_i),\cdot) >
\Phi(R_\sigma(\tau_j),\cdot)$ for $\sigma$ sufficiently small.
\end{assumption}

\begin{restatable}
[\textbf{Expected Batch Gradient as a Pareto-Ascent Direction}]{lemma}{gradientLemma}
\label{lemma-expected-gradient}
Let $A_\sigma(\tau_i;\tau_{-i})$ be the smoothed rank-based advantage of trajectory $\tau_i$
computed within a batch $\tau_{1:K}$, with $\sigma>0$ and shaping function $\Phi$
satisfying Assumption~\ref{assumption-rank}.
Define the batch-level smoothed objective
\[
W_{\sigma,K}(\theta)
\;=\;
\mathbb{E}_{\tau_{1:K}\sim\pi_\theta,\xi}
\!\left[
\frac{1}{K}\sum_{i=1}^K A_\sigma(\tau_i;\tau_{-i})
\right],
\]
and its mean-field gradient
\[
\bm g_{\sigma,K}(\theta)
\;\triangleq\;
\nabla_\theta W_{\sigma,K}(\theta).
\]
Then $\bm g_{\sigma,K}(\theta)$ is a Pareto-ascent direction in the following sense:
for any direction $d$ such that
$\nabla_\theta J_m(\theta)^\top d \ge 0$ for all $m\in\{1,\dots,M\}$,
we have $\bm g_{\sigma,K}(\theta)^\top d \;\ge\; 0$. Moreover, as $\sigma\to0$,
$A_\sigma(\tau_i;\tau_{-i})\to A(\tau_i;\tau_{-i})$ almost surely, and
$\bm g_{\sigma,K}(\theta)$ converges to the corresponding unsmoothed
batch gradient by dominated convergence.
\end{restatable}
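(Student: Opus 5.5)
The approach is a score-function (REINFORCE) computation of $\bm g_{\sigma,K}$, followed by a decomposition of the smoothed advantage into pieces that are monotone in the per-objective returns. Because the trajectories in the batch are i.i.d.\ from $\pi_\theta$, the noise $\xi$ does not depend on $\theta$, and $A_\sigma$ is bounded for fixed $\sigma$ (the surrogate $R_\sigma$ lies in $[1,K]$ and $\hat r_w\in[0,1]$), we can differentiate under the expectation:
\[
\bm g_{\sigma,K}(\theta)=\mathbb{E}\Bigl[\tfrac1K\textstyle\sum_{i} A_\sigma(\tau_i;\tau_{-i})\sum_{j}\nabla_\theta\log\pi_\theta(\tau_j)\Bigr].
\]
By exchangeability this reduces to $\mathbb{E}_{\tau}\bigl[\bar A_\sigma(\tau)\,\nabla_\theta\log\pi_\theta(\tau)\bigr]$, where $\bar A_\sigma(\tau)$ is the conditional expectation, given one trajectory $\tau$, of the sum of all batch advantages. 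This effective per-trajectory reward already includes the effect of $\tau$ on the ranks of the other trajectories.

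The next step is to express $\bar A_\sigma$ through the return vector $\bm G(\tau)$. For fixed $\sigma$, $p_\sigma(\tau_j\succ\tau_i)$ is a product of Gumbel CDF differences, one per coordinate. It is therefore a smooth function of $\bm G(\tau_i)-\bm G(\tau_j)$, and it is coordinatewise decreasing in $\bm G(\tau_i)$. Assumption~\ref{assumption-rank}, together with $\beta$ weighting the nonnegative $\bm w$, then makes $\bar A_\sigma(\tau)=\psi(\bm G(\tau))$ for some $\psi$ that is coordinatewise nondecreasing. I would then linearize, or use a first-order expansion in the mean-field sense, writing $\nabla_\theta \mathbb{E}[\psi(\bm G)]\approx \sum_m \lambda_m(\theta)\nabla_\theta J_m(\theta)$ with $\lambda_m=\mathbb{E}[\partial_m\psi(\bm G)]\ge 0$. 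Once this holds, any $d$ with $\nabla J_m^\top d\ge 0$ for all $m$ gives $\bm g_{\sigma,K}^\top d=\sum_m\lambda_m\nabla J_m^\top d\ge0$.

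The main obstacle is justifying this conic-combination representation. In general $\nabla_\theta\mathbb{E}[\psi(\bm G)]$ is not a nonnegative combination of the gradients $\nabla J_m$ of the means unless $\psi$ is affine or one restricts to directions that shift the return distribution monotonically. I would first try to make the argument exact by interpreting "$\nabla J_m^\top d\ge0$" at the level of first-order stochastic dominance of the per-objective return laws along $d$, as in a mean-field/local-linear regime. Failing that, I would take a Taylor expansion of $\psi$ around $\mathbb{E}\bm G$ and argue that the correction term vanishes in the stated mean-field sense. I would state explicitly whichever of these two assumptions is used.

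Finally, for $\sigma\to0$: for a.e.\ return configuration with no ties, each $p_\sigma(\tau_j\succ\tau_i)$ tends to the indicator $\mathbb 1[\bm G(\tau_j)\succeq\bm G(\tau_i)]$, so $R_\sigma$ converges to a dominance count. Under the order-consistency of the surrogate and the continuity of $\Phi$, $A_\sigma\to A$ almost surely. Since $|A_\sigma\,\nabla\log\pi|\le C\,\|\nabla\log\pi\|$, which is integrable and uniform in $\sigma$, dominated convergence gives convergence of $\bm g_{\sigma,K}$ to the unsmoothed batch gradient.
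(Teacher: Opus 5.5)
\paragraph{Where the proposal breaks.} Your first step and your final dominated-convergence step coincide with the paper's appendix argument. The first step is the likelihood-ratio identity for the batch-coupled objective, reduced by exchangeability to $\mathbb{E}[\bar A_\sigma(\tau)\nabla_\theta\log\pi_\theta(\tau)]$. Beyond these two steps the paper only relates the mean field to the per-trajectory estimator; it never argues the Pareto-ascent inequality. Your middle step does not supply it either, because the claim that $\bar A_\sigma(\tau)=\psi(\bm G(\tau))$ with $\psi$ coordinatewise nondecreasing is false. $\bar A_\sigma$ also contains the other $K-1$ advantages. Raising $\bm G(\tau)$ raises each $p_\sigma(\tau\succ\tau_i)$, hence each $R_\sigma(\tau_i)$, and so lowers those advantages. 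For the affine shaping $\Phi(r,\hat r_w)=c-r$, which satisfies Assumption~\ref{assumption-rank} for every $\sigma>0$, the cancellation is exact:
\[
\sum_{i=1}^K A_\sigma(\tau_i;\tau_{-i})=K(c-1)-\sum_{i<j}\kappa_\sigma(\tau_i,\tau_j),
\qquad
\kappa_\sigma(\tau_i,\tau_j)\triangleq p_\sigma(\tau_i\succ\tau_j)+p_\sigma(\tau_j\succ\tau_i).
\]
Hence $W_{\sigma,K}=c-1-\tfrac{K-1}{2}\,\mathbb{E}[\kappa_\sigma(\tau_1,\tau_2)]$ measures only how often two perturbed return vectors are comparable. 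For $M=1$ it is constant ($\kappa_\sigma\equiv 1$), and for $M\ge 2$ it need not move with the returns at all.

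\paragraph{Counterexample.} Take $M=2$ and let a trajectory return $(1,1)$ with probability $p(\theta)$ and $(0,0)$ otherwise, so $J_1=J_2=p$. A tied coordinate is a fair coin under the Gumbel noise, so $\kappa_\sigma=\tfrac12$ for tied pairs. Otherwise $\kappa_\sigma=a^2+(1-a)^2=\tfrac12+2(a-\tfrac12)^2$ with $a=(1+e^{-1/\sigma})^{-1}$. This gives
\[
\bm g_{\sigma,K}(\theta)=2(K-1)\bigl(a-\tfrac12\bigr)^2(2p-1)\,\nabla_\theta J_1(\theta).
\]
For $p<\tfrac12$, the direction $d=\nabla_\theta p$ improves both objectives, yet $\bm g_{\sigma,K}^\top d<0$, for every $K\ge2$ and $\sigma>0$. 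The literal advantage of Eq.~\eqref{eq-trajectory-advantage} is no better: in the same example a direct count gives $W=1+p-p^K$, which decreases once $p>K^{-1/(K-1)}$.

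By Farkas' lemma, the lemma's conclusion is equivalent to $\bm g_{\sigma,K}(\theta)\in\mathrm{cone}\{\nabla_\theta J_1(\theta),\dots,\nabla_\theta J_M(\theta)\}$. That is exactly the representation you flagged, so the obstacle is the entire content of the statement, and the statement is false as written. Neither fallback rescues it. Moving mass from $(0,0)$ to $(1,1)$ is an increase in the strongest (joint) stochastic order, so the dominance reinterpretation fails on this example. And $\bm g_{\sigma,K}$ is an exact negative multiple of $\nabla_\theta J_1=\nabla_\theta J_2$, so there is no Taylor remainder to neglect.

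\paragraph{The $\sigma\to0$ step.} This step also needs more care than you give it. With the paper's discrete rewards, ties have positive probability, and a tied coordinate contributes a factor $\tfrac12$ to $p_\sigma$ for every $\sigma$. Moreover, $A_{base,\rho}$ depends on the non-dominated-sorting layer and on $N_{rank}$, not on the limit of $R_\sigma(\tau_i)$. So $A_\sigma\to A$ almost surely does not follow.

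What survives is the score-function identity. A Pareto-ascent property would require changing the objective or the hypothesis on $d$, not merely justifying a linearisation.
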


\begin{proof}
Appendix~\ref{app:expected_gradient}.
\end{proof}

\begin{restatable}
[\textbf{Bounded Second Moment of Rank-Based Policy-Gradient Estimators}]
{theorem}{boundedThm}
\label{thm:bounded_variance}
The rank-based advantage is uniformly bounded:
\[
|A(\tau_i;\tau_{-i})| \le N_{\text{rank}} + \beta/2 ,
\]
for any trajectory $\tau_i$ within a batch $\tau_{1:K}$.
Define the per-trajectory policy-gradient estimator within a batch as
\[
\hat{\bm g}(\theta;\tau_i)
\;\triangleq\;
A(\tau_i;\tau_{-i})\,\nabla_\theta \log \pi_\theta(\tau_i),
\qquad
\tau_{1:K}\sim\pi_\theta.
\]
Then its second moment is bounded as
\begin{equation}
\label{eq:bounded_second_moment}
\mathbb E\!\left[\|\hat{\bm g}(\theta;\tau_i)\|^2\right]
\;\le\;
C\cdot
\mathbb E\!\left[\|\nabla_\theta \log \pi_\theta(\tau_i)\|^2\right],
\end{equation}
for a constant $C=(N_{\text{rank}}+\beta/2)^2$ independent of the reward magnitudes.
As a result, the minibatch estimator $\hat{\bm g}_b(\theta)
=
\frac{1}{K}\sum_{i=1}^K \hat{\bm g}(\theta;\tau_i)$ 
has a bounded second moment.
\end{restatable}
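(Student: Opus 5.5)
The plan is to prove the uniform bound on the advantage first, directly from the construction in Section~\ref{sec-pareto-ranking}, and then obtain both second-moment statements by pointwise domination followed by taking expectations.

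\textbf{Step 1: uniform bound on the advantage.} Fix a batch $\tau_{1:K}$. By Eq.~\eqref{eq-trajectory-advantage}, $A(\tau_i;\tau_{-i}) = A_{base,\rho} + \beta(\hat r_w - 0.5)$. Since the rank satisfies $1\le\rho\le N_{rank}$, the base term obeys $1 \le A_{base,\rho} = N_{rank}-\rho+1 \le N_{rank}$. The normalized reward satisfies $\hat r_w\in[0,1]$, because it is a min--max normalization within the rank, and it equals $0.5$ in the degenerate case $r_{\min}=r_{\max}$. Hence the shaping term obeys $|\beta(\hat r_w-0.5)|\le \beta/2$. The triangle inequality then gives $|A|\le N_{rank}+\beta/2$. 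The sign of $\beta$ does not matter here, because only $|\beta|$ enters; since $\beta\le 1$ and is presumably nonnegative, I read the bound with $\beta\ge 0$.

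\textbf{Step 2: the single-trajectory second moment.} The main obstacle is that $N_{rank}$ is random, since it depends on the sampled batch. I will remove this by using the deterministic bound $N_{rank}\le K$, or more generally by interpreting $N_{rank}$ as any almost-sure upper bound on the number of fronts. With that, $C$ is a deterministic constant. Pointwise in the batch,
\[
\|\hat{\bm g}(\theta;\tau_i)\|^2 = A(\tau_i;\tau_{-i})^2\,\|\nabla_\theta\log\pi_\theta(\tau_i)\|^2 \le C\,\|\nabla_\theta\log\pi_\theta(\tau_i)\|^2 .
\]
Taking expectations over $\tau_{1:K}\sim\pi_\theta$ gives Eq.~\eqref{eq:bounded_second_moment}. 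The dependence of $A$ on the other trajectories $\tau_{-i}$ causes no difficulty, because the bound is pointwise and requires no independence. The constant $C$ involves only ranks and $\beta$, not reward magnitudes, since normalization removes the scale of $r_w$.

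\textbf{Step 3: the minibatch estimator.} By Jensen's inequality (equivalently, Cauchy--Schwarz), $\|\frac1K\sum_i \hat{\bm g}_i\|^2\le \frac1K\sum_i\|\hat{\bm g}_i\|^2$. Taking expectations and applying Step 2 yields
\[
\mathbb E\|\hat{\bm g}_b\|^2 \le \frac{C}{K}\sum_i \mathbb E\|\nabla_\theta\log\pi_\theta(\tau_i)\|^2 = C\,\mathbb E\|\nabla_\theta\log\pi_\theta(\tau)\|^2,
\]
where the last equality uses that the trajectories are exchangeable. This is finite whenever the score function has a finite second moment. For a softmax LLM policy this holds under bounded logit gradients, and I would state it as the implicit standing assumption that makes the final claim meaningful.
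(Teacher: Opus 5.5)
Your proposal is correct and follows the paper's proof essentially step for step. Like the paper, you bound the base term in $[1,N_{\text{rank}}]$ and the shaping term by $\beta/2$, dominate $\|\hat{\bm g}(\theta;\tau_i)\|^2$ pointwise by $(N_{\text{rank}}+\beta/2)^2\|\nabla_\theta\log\pi_\theta(\tau_i)\|^2$, take expectations, and close the minibatch claim with Jensen's inequality. Your extra remarks are refinements of points the paper leaves implicit, not departures from its argument: you replace the batch-dependent $N_{\text{rank}}$ by the deterministic bound $K$ so that $C$ is genuinely a constant, and you make explicit that the score function must have a finite second moment.
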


\begin{proof}
Appendix~\ref{app:bounded_advantage}.
\end{proof}


\begin{restatable}
[\textbf{Convergence to Pareto-Ascent Stationarity of Stage~2}]
{theorem}{convergenceThm}
\label{theorem-convergence}
Let the policy parameters evolve according to the stochastic approximation $\theta_{t+1}
=
\theta_t + \eta_t\,\hat{\bm g}_b(\theta_t)$, so
\[
\hat{\bm g}_b(\theta_t)
=
\frac{1}{K}\sum_{i=1}^K
A(\tau_i;\tau_{-i})\,\nabla_\theta\log\pi_{\theta_t}(\tau_i),
\]
where $\tau_{1:K}\sim\pi_{\theta_t}$ and $\{\eta_t\}$ satisfies the Robbins-Monro
conditions $\sum_t\eta_t=\infty$ and $\sum_t\eta_t^2<\infty$.
Assume each objective return $J_m(\theta)$ is continuously differentiable
with Lipschitz gradient.
Define the batch mean-field vector field
\[
\bm g_{\sigma,K}(\theta)
\;\triangleq\;
\mathbb E_{\tau_{1:K}\sim\pi_\theta}
\!\left[\hat{\bm g}_b(\theta)\right]
=
\nabla_\theta W_{\sigma,K}(\theta),
\]
where $W_{\sigma,K}$ is the smoothed batch objective
defined in Lemma~\ref{lemma-expected-gradient}.
Then every limit point $\theta^\star$ of $\{\theta_t\}$ satisfies the following
\emph{Pareto-ascent stationarity condition}:
\begin{equation}
\label{eq:pareto_ascent_stationary}
\nabla_\theta J_m(\theta^\star)^\top d \ge 0\;\;\forall m
\quad\Longrightarrow\quad
\bm g_{\sigma,K}(\theta^\star)^\top d = 0 .
\end{equation}
Equivalently, there exists no direction $d$ that simultaneously improves all
objectives to first order and yields a strictly positive expected policy update.
In this sense, $\theta^\star$ is stationary with respect to all Pareto-ascent directions.
\end{restatable}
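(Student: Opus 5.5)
The plan is to treat the update $\theta_{t+1}=\theta_t+\eta_t\hat{\bm g}_b(\theta_t)$ as a Robbins--Monro stochastic approximation of the gradient ascent flow $\dot\theta=\bm g_{\sigma,K}(\theta)=\nabla_\theta W_{\sigma,K}(\theta)$. We then argue in two steps. First, every limit point is a stationary point of $W_{\sigma,K}$, i.e.\ $\bm g_{\sigma,K}(\theta^\star)=0$. Second, this zero-gradient condition implies the Pareto-ascent stationarity condition \eqref{eq:pareto_ascent_stationary}. The second step is trivial once the first holds: if $\bm g_{\sigma,K}(\theta^\star)=0$, then $\bm g_{\sigma,K}(\theta^\star)^\top d=0$ for every $d$, in particular for those with $\nabla J_m(\theta^\star)^\top d\ge 0$ for all $m$. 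Lemma~\ref{lemma-expected-gradient} gives the complementary interpretation: away from stationarity, $\bm g_{\sigma,K}^\top d\ge 0$ for all such $d$. So the drift never opposes a common improvement direction, which justifies the reading ``no direction that simultaneously improves all objectives and yields a strictly positive expected update.''

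For the first step, write $\hat{\bm g}_b(\theta_t)=\bm g_{\sigma,K}(\theta_t)+M_{t+1}$, with $M_{t+1}$ a martingale difference with respect to the filtration generated by $\theta_{0:t}$. The smoothing noise can be absorbed either by including $\xi$ in the sampling or by noting that $A_\sigma$ and $A$ agree in the limit, as stated in Lemma~\ref{lemma-expected-gradient}. Theorem~\ref{thm:bounded_variance} gives $\mathbb E\|\hat{\bm g}_b\|^2\le C\,\mathbb E\|\nabla\log\pi_\theta\|^2$. We add the standard assumption that score-function second moments are bounded uniformly on compact sets, for example via bounded logits or a softmax policy with bounded features. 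Then $\mathbb E[\|M_{t+1}\|^2\mid\mathcal F_t]\le C'$, and $\sum_t\eta_t^2<\infty$ makes $\sum_t\eta_tM_{t+1}$ converge almost surely by the martingale convergence theorem.

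We then run the descent-lemma argument on $W_{\sigma,K}$. Its gradient should be Lipschitz: $W_{\sigma,K}$ is an expectation of a bounded function (the advantage bound $N_{\text{rank}}+\beta/2$) under $\pi_\theta^{\otimes K}$, and the paper already assumes smoothness of $\theta\mapsto\pi_\theta$ through Lipschitz $\nabla J_m$. The descent lemma gives
\[
W_{\sigma,K}(\theta_{t+1})\ge W_{\sigma,K}(\theta_t)+\eta_t\|\bm g_{\sigma,K}(\theta_t)\|^2+\eta_t\bm g_{\sigma,K}(\theta_t)^\top M_{t+1}-\tfrac{L}{2}\eta_t^2\|\hat{\bm g}_b(\theta_t)\|^2 .
\]
Since $W_{\sigma,K}$ is bounded above by $N_{\text{rank}}+\beta/2$, we can apply the Robbins--Siegmund almost-supermartingale theorem. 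It yields $\sum_t\eta_t\|\bm g_{\sigma,K}(\theta_t)\|^2<\infty$ almost surely, and with $\sum\eta_t=\infty$ this gives $\liminf_t\|\bm g_{\sigma,K}(\theta_t)\|=0$.

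To upgrade this to every limit point, we use the standard argument of Bertsekas--Tsitsiklis: Lipschitz continuity of $\bm g_{\sigma,K}$ together with the vanishing noise increments rules out infinitely many excursions between $\|\bm g\|<\epsilon$ and $\|\bm g\|>2\epsilon$. Alternatively, the ODE method (Benaïm/Borkar) shows the iterates track the gradient flow, whose internally chain-transitive sets lie in $\{\nabla W_{\sigma,K}=0\}$ when $W_{\sigma,K}$ is a Lyapunov function; this is Sard-type if $W$ is smooth enough. Continuity then gives $\bm g_{\sigma,K}(\theta^\star)=0$ at every limit point.

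I expect the main obstacle to be the regularity of $W_{\sigma,K}$ together with the gap between the estimator and the smoothed field. The estimator uses the unsmoothed $A$, while the field is defined through $A_\sigma$. To close this gap I would take the theorem's definition at face value, $\bm g_{\sigma,K}=\mathbb E[\hat{\bm g}_b]$, so the bias vanishes by construction. Lipschitz continuity of $\bm g_{\sigma,K}$ then has to come from smoothness of the likelihood $\prod_i\pi_\theta(\tau_i)$ against a bounded integrand, which needs the Lipschitz-score assumption rather than smoothness of the ranks. If the limit-point argument is too delicate, a fallback is to state the conclusion for almost-sure bounded iterates and invoke the ODE/Lyapunov theorem directly.
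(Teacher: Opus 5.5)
Your proof is correct and has the same skeleton as the paper's. You first show that every limit point is a zero of $\bm g_{\sigma,K}=\nabla_\theta W_{\sigma,K}$, after which the Pareto-ascent condition is immediate. You rightly treat that second step as trivial. The paper's contradiction argument via Lemma~\ref{lemma-expected-gradient} is redundant once $\bm g_{\sigma,K}(\theta^\star)=0$, which matters because the appendix never actually proves that lemma's Pareto-ascent property.

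Where you differ is the engine for the first step.
\begin{itemize}
\item \textbf{The paper's route.} It invokes the ODE method (Bena\"{\i}m/Borkar) and asserts that internally chain-transitive sets of the gradient flow consist of stationary points. That appeal tacitly requires almost surely bounded iterates and a Sard-type condition (the set of critical values of $W_{\sigma,K}$ must have empty interior). Its claim that $\bm g_{\sigma,K}$ is regular because $\nabla J_m$ is Lipschitz does not follow.
\item \textbf{Your route.} You use the ascent lemma on $W_{\sigma,K}$, then Robbins--Siegmund applied to $\sup W_{\sigma,K}-W_{\sigma,K}(\theta_t)\ge 0$, then the Bertsekas--Tsitsiklis argument upgrading $\liminf_t\|\bm g_{\sigma,K}(\theta_t)\|=0$ to $\bm g_{\sigma,K}(\theta_t)\to 0$. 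This needs neither bounded iterates nor a Sard-type condition.
\item \textbf{What your route costs.} You need a global Lipschitz constant for $\nabla_\theta W_{\sigma,K}$ and a conditional second-moment bound on $\hat{\bm g}_b$ that holds uniformly along the trajectory. You supply these by adding a bounded, Lipschitz score assumption. This is a genuine extra hypothesis: it is not implied by Lipschitz $\nabla J_m$, despite your passing remark. The score bound must also be global rather than merely uniform on compacts, since you do not assume bounded iterates.
\end{itemize}

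Finally, like the paper, you take the identity $\mathbb E[\hat{\bm g}_b]=\nabla_\theta W_{\sigma,K}$ from the statement at face value. Be aware that it is not exact for the actual estimator. The paper's own Lemma~\ref{lem:batch_approx} only bounds the gap between the per-trajectory mean field and $\nabla_\theta W_{\sigma,K}$ by $O(1/K)$. In addition, the estimator uses $A$ rather than $A_\sigma$. Without the statement's definitional convention, your ascent inequality would therefore carry an additional bias term.
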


\begin{proof}
Appendix~\ref{app:stage2_convergence}.
\end{proof}



\begin{remark}[Why Two Stages?]
The two-stage structure of our method admits a principled theoretical interpretation.
Stage~1 leverages hypervolume-guided dynamic scalarization to explore and approximate
the supported boundary of the achievable return set, providing global coverage of
diverse linear trade-offs among objectives (Proposition~\ref{prop-convex}).
Stage~2 then performs local policy improvement using Pareto ranking based advantages,
and converges to Pareto-ascent stationary points that admit no common first-order
improvement direction for all objectives (Theorem~\ref{theorem-convergence}).
This separation allows Stage~1 to address global exploration of trade-offs, while
Stage~2 ensures locally Pareto-stable refinement without relying on fixed scalarization
weights.
\end{remark}

\section{Experiments}
\label{experiment}


\subsection{Experimental Setup}
\label{sec:app_experiment}

\paratitle{Datasets and Metrics.} 
We conduct experiments on two categories of complex reasoning tasks that require external tools: \emph{mathematical reasoning} and \emph{multi-hop QA}. For math reasoning, we employ five benchmark datasets: \textbf{MATH500}~\citep{hendrycksmath2021}, \textbf{AIME2024}~\citep{aime2024}, \textbf{AIME2025}~\citep{aime2025}, \textbf{OlympiadBench}~\citep{he2024olympiadbench}, and \textbf{AMC23}~\citep{amc2023}, covering a wide range of competition-level arithmetic and symbolic problems. These tasks require multi-step logical reasoning and invoke Python interpreter as an external tool. For multi-hop QA tasks, we adopt the \textbf{Natural Questions (NQ)}~\citep{kwiatkowski2019natural} and \textbf{HotpotQA}~\citep{hotpotqa} datasets, which involve open-domain question answering and require the agent to interact with a retrieval system for evidence lookup. To evaluate model performance, we report two metrics across all tasks: (1) \emph{\textbf{Exact Match (EM)}}, which measures the final answer accuracy, and (2) \emph{\textbf{Tool Usage Count (\#Tool)}}, which records the number of external tool calls made per trajectory. This dual evaluation captures both task success and reasoning efficiency, highlighting the effectiveness of multi-objective optimization in controlling tool reliance while preserving accuracy. The details of datasets are detailed in Appendix~\ref{app:dataset_details}.

\paratitle{Baselines.} We compare our method against a suite of baselines covering non-tool, tool-integrated, and multi-objective optimization settings. For \emph{mathematical reasoning} tasks, we compare to (1) \textbf{Qwen2.5-Math-1.5B-Instruct}~\citep{yang2024qwen25math} without tool use; (2) \textbf{Qwen2.5-Math-1.5B-Instruct-TIR}~\citep{yang2024qwen25math}, a tool-integrated reasoning (TIR) model that invokes Python interpreter as tool; (3) \textbf{ToRL-GRPO}~\citep{li2025torl}, a tool-using agent trained with GRPO to maximize task success; (4) \textbf{OTC-GRPO}~\citep{wang2025acting} and (5) \textbf{MO-GRPO}~\citep{ichihara2025mo}, two GRPO-trained baselines that consider both task accuracy and tool efficiency using different reward weighting strategies. For \emph{multi-hop QA} tasks, we compare with: (1) \textbf{R1-Base}, a base model fine-tuned with reinforcement learning without tool usage; (2) \textbf{SFT}, a supervised fine-tuned model trained on gold answers; (3) \textbf{RAG}, a retrieval-augmented generation model that calls the retriever to search evidence; and (4) three GRPO-based tool-integrated models: \textbf{Search-R1}~\citep{search-r1}, \textbf{OTC-GRPO}~\citep{wang2025acting}, and \textbf{MO-GRPO}~\citep{ichihara2025mo}, each representing different reward formulations or optimization strategies for balancing accuracy and retrieval efficiency. This comprehensive suite enables a detailed analysis of optimization objectives, and their impact on performance and efficiency. The implementation details are shown in Appendix~\ref{app:implementation}.

\begin{table*}[t]
\centering
\small
\caption{Evaluation results on five datasets on mathematical reasoning. ``EM'' and ``\#Tool'' denote exact match and the average number of tool calling, respectively. {The \textbf{bold} and \underline{underline} fonts denote the best and second best results in each dataset, respectively.} ``*'' indicates the statistically significant improvements (i.e., t-test with $p < 0.05$) over the best baseline.}
\label{tab:main-result-math}
\begin{tabular}{lclclclclclc}
\toprule
\multirow{2.5}{*}{\textbf{Model}} & \multirow{2.5}{*}{\textbf{\makecell{Tool\\Use}}} &  \multicolumn{2}{c}{\textbf{MATH500}} & \multicolumn{2}{c}{\textbf{AIME24}} & \multicolumn{2}{c}{\textbf{AIME25}} & \multicolumn{2}{c}{\textbf{Olympiad}} & \multicolumn{2}{c}{\textbf{AMC23}} \\
\cmidrule(r){3-4}\cmidrule(r){5-6}\cmidrule(r){7-8}\cmidrule(r){9-10}\cmidrule(r){11-12}
& & \textbf{EM} & \textbf{\#Tool} & \textbf{EM} & \textbf{\#Tool} & \textbf{EM} & \textbf{\#Tool} & \textbf{EM} & \textbf{\#Tool} & \textbf{EM} & \textbf{\#Tool} \\
\midrule
\headercolor
\multicolumn{12}{c}{\textbf{Qwen2.5-Math-1.5B}} \\
\specialrule{0em}{1pt}{1pt}
Qwen2.5-Math-1.5B-Ins & No & 66.0 & / & 10.0 & / & 10.0 & / & 31.0 & / & 62.5 & / \\
Qwen2.5-Math-1.5B-Ins-TIR & Yes & 73.8 & \underline{1.3} & 13.3 & \underline{1.1} & 13.3 & 1.4 & 41.3 & 1.5 & 55.0 & 2.0 \\
ToRL-GRPO & Yes & \underline{77.8} & 2.1 & \underline{23.3} & 2.2 & \underline{23.3} & 2.3 & \underline{44.0} & 2.7 & \underline{67.5} & 2.5 \\
OTC-GRPO & Yes & 74.0 & \underline{1.3} & 20.0 & \underline{1.1} & 20.0 & \underline{1.1} & 42.1 & \underline{1.2} & 62.5 & \underline{1.1} \\
MO-GRPO & Yes & 71.2 & 2.0 & 16.7 & 1.8 & 16.7 & 1.8 & 41.2 & 2.0 & 62.5 & 2.1 \\
ParetoPO (Ours) & Yes & \textbf{80.0}$^*$ & \textbf{0.9} & \textbf{30.0}$^*$ & \textbf{0.8} & \textbf{30.0}$^*$ & \textbf{0.8} & \textbf{48.1}$^*$ & \textbf{0.8} & \textbf{70.0}$^*$ & \textbf{0.8} \\
\midrule
\headercolor
\multicolumn{12}{c}{\textbf{Qwen2.5-Math-7B}} \\
\specialrule{0em}{1pt}{1pt}
Qwen2.5-Math-7B-Ins & No & 74.8 & / & 10.0 & / & 16.7 & / & 32.4 & / & 65.0 & / \\
Qwen2.5-Math-7B-Ins-TIR & Yes & 78.8 & 1.6 & 26.7 & 1.6 & 16.7 & \underline{1.4} & 45.0 & 1.8 & 70.0 & 2.3 \\
ToRL-GRPO & Yes & \underline{82.2} & 2.2 & 36.7 & 2.1 & \underline{26.7} & 2.1 & \underline{49.9} & 2.8 & \underline{75.0} & 2.6 \\
OTC-GRPO & Yes & 81.0 & \underline{1.5} & 36.7 & \textbf{0.7} & 23.3 & \textbf{0.8} & 46.1 & \underline{1.2} & 72.5 & \underline{1.6} \\
MO-GRPO & Yes & 80.8 & 1.9 & 26.7 & 1.8 & 23.3 & 1.9 & 46.3 & 1.7 & 67.5 & 1.7 \\
ParetoPO (Ours) & Yes & \textbf{84.6}$^*$ & \textbf{1.2} & \textbf{36.7} & \underline{0.8} & \textbf{33.3}$^*$ & \textbf{0.8} & \textbf{52.2}$^*$ & \textbf{0.9} & \textbf{77.5}$^*$ & \textbf{1.3} \\
\bottomrule
\end{tabular}
\vspace{-0.3cm}
\end{table*}

\begin{table}[t]
\centering
\small
\caption{Evaluation results on two datasets on multi-hop QA.}
\label{tab:main-result-qa}
\begin{tabular}{lclclc}
\toprule
\multirow{2.5}{*}{\textbf{Model}} & \multirow{2.5}{*}{\textbf{\makecell{Tool\\Use}}} &  \multicolumn{2}{c}{\textbf{NQ}} & \multicolumn{2}{c}{\textbf{HotpotQA}}  \\
\cmidrule(r){3-4}\cmidrule(r){5-6}
& & \textbf{EM} & \textbf{\#Tool} & \textbf{EM} & \textbf{\#Tool} \\
\midrule
\headercolor
\multicolumn{6}{c}{\textbf{Qwen2.5-3B (Base)}} \\
\specialrule{0em}{1pt}{1pt}
R1-Base & No & 22.6 & / & 20.1 & / \\
SFT & No & 24.9 & / & 18.6 & / \\
RAG & Yes & 34.8 & 1.0 & 25.5 & 1.0 \\
Search-R1 & Yes & 40.4 & 1.4 & 31.2 & 1.8 \\
OTC-GRPO & Yes & \underline{44.4} & \underline{1.0} & \underline{36.5} & \underline{1.4}  \\
MO-GRPO & Yes & 41.2 & 1.8 & 32.0 & 1.9 \\
ParetoPO (Ours) & Yes & \textbf{48.0}$^*$ & \textbf{0.9} & \textbf{42.2}$^*$ & \textbf{1.2} \\
\midrule
\headercolor
\multicolumn{6}{c}{\textbf{Qwen2.5-7B (Base)}} \\
\specialrule{0em}{1pt}{1pt}
R1-Base & No & 27.0 & / & 24.2 & / \\
SFT & No & 31.8 & / & 21.7 & / \\
RAG & Yes & 34.9 & 1.0 & 29.9 & 1.0 \\
Search-R1 & Yes & 39.9 & 1.7 & 34.1 & 2.1 \\
OTC-GRPO & Yes & \underline{44.4} & \underline{1.0} & \underline{36.6} & \underline{1.0} \\
MO-GRPO & Yes & 41.5 & 1.4 & 33.1 & 1.6 \\
ParetoPO (Ours) & Yes & \textbf{50.1}$^*$ & \textbf{0.9} & \textbf{45.2}$^*$ & \textbf{0.9} \\
\bottomrule
\end{tabular}
\vspace{-0.3cm}
\end{table}

\subsection{Main Results}

Tables~\ref{tab:main-result-math} and \ref{tab:main-result-qa} summarize the results on
mathematical reasoning and multi-hop QA tasks, respectively. 
Across both settings, ParetoPO consistently achieves the best trade-off between task performance and tool-use efficiency.


On mathematical reasoning benchmarks, methods that optimize task accuracy alone (e.g., ToRL-GRPO) achieve strong EM scores but rely heavily on tool usage, while methods that explicitly penalize tool calls (e.g., OTC-GRPO) reduce tool usage at the cost of noticeable accuracy degradation. MO-GRPO exhibits similar trade-off behavior but lacks consistency across datasets.
In contrast, ParetoPO attains the highest EM across most datasets and two model sizes while
maintaining the lowest average number of tool calls, demonstrating its ability to
navigate multi-objective trade-offs without sacrificing either objective.


A similar pattern is observed on other multi-hop QA tasks.
ToRL-GRPO again favors accuracy with higher tool usage, whereas OTC-GRPO reduces tool calls
but underperforms in EM.
ParetoPO consistently achieves the best accuracy across datasets and model scales, while
using substantially fewer tool calls.
These results validate the effectiveness of our two-stage optimization framework, which
avoids fixed reward compositions and instead leverages hypervolume-guided dynamic
scalarization and Pareto-ranking-based policy optimization to balance conflicting
objectives.

In Table~\ref{tab:main-result-math} and \ref{tab:main-result-qa}, our approach achieves both higher accuracy and higher efficiency than baselines. These results suggest that many single-objective methods might be not on a strong Pareto frontier. In tool-using agents, excessive tool use can introduce noise, lengthen trajectories, and worsen credit assignment, so reducing redundant tool calls can improve both efficiency and accuracy. Theoretically, our method explicitly optimizes multiple objectives jointly: Stage 1 explores supported Pareto regions via dynamic scalarization, and Stage 2 uses Pareto ranking to favor nondominated trajectories and refine the policy toward Pareto-ascent stationary points. This helps ParetoPO move away from process-inefficient local optima toward a better Pareto region.

\subsection{Further Analysis}


\begin{table}[t]
\centering
\small
\caption{Ablation study in MATH500 and NQ.}
\label{tab:ablation}
\begin{tabular}{l  c c  c c}
    \toprule
    \multirow{2.5}{*}{\textbf{Method}} & \multicolumn{2}{c}{\textbf{MATH500}} & \multicolumn{2}{c}{\textbf{NQ}} \\
    \cmidrule(r){2-3}\cmidrule(r){4-5}
    & \textbf{EM} & \textbf{\#Tool} & \textbf{EM} & \textbf{\#Tool} \\
    \midrule
    ParetoPO & 80.0 & 0.9 & 48.0 & 0.9 \\
    \midrule
    w/o Stage 1  & 78.5 & 0.9 & 47.0 & 0.9 \\
    w/o Stage 2 & 76.3 & 1.1 & 46.1 & 1.0 \\
    w/o Dynamic Weighting & 79.1 & 0.9 & 47.3 & 0.9 \\
    w/o Pareto Ranking & 77.0 & 1.0 & 46.9 & 1.0 \\
    \bottomrule
\end{tabular}
\vspace{-0.3cm}
\end{table}

\subsubsection{Ablation Analysis}

To assess the contribution of each component, we conduct ablation studies on MATH500
and NQ using Qwen2.5-Math-1.5B and Qwen2.5-3B (Base), respectively.
As shown in Table~\ref{tab:ablation}, we evaluate four variants of ParetoPO
by removing or modifying key components.

Across both tasks, all ablated variants underperform the full model, confirming that
each component contributes to the overall effectiveness.
Removing Stage~1 (dynamic scalarization) leads to a consistent drop in EM, indicating
that adaptive scalarization plays an important role in exploring diverse Pareto trade-offs.
In contrast, removing Stage~2 (Pareto-ranking-based optimization) results in a larger
performance degradation, particularly in tool efficiency, suggesting that dominance-based
advantage estimation is critical for refining policy behavior and controlling
efficiency.

Replacing dynamic scalarization with fixed-weight scalarization (w/o Dynamic Weighting, $\bm{w}=(0.6, 0.4)$) further
degrades both accuracy and tool efficiency, highlighting the limitations of static linear
reward compositions.
Similarly, disabling Pareto ranking and using GRPO-style advantage estimation (w/o Pareto Ranking) leads to noticeable
drops in both objectives, indicating that scalarized returns alone are insufficient to
capture local Pareto trade-offs.

Overall, these results show that two stages are complementary:
dynamic scalarization facilitates global exploration of the objective space, while
Pareto-ranking-based optimization enables dominance-aware refinement, and both are
necessary for achieving strong multi-objective performance.

\begin{figure*}[t]
    \centering
    \small
    \includegraphics[width=1.0\textwidth]{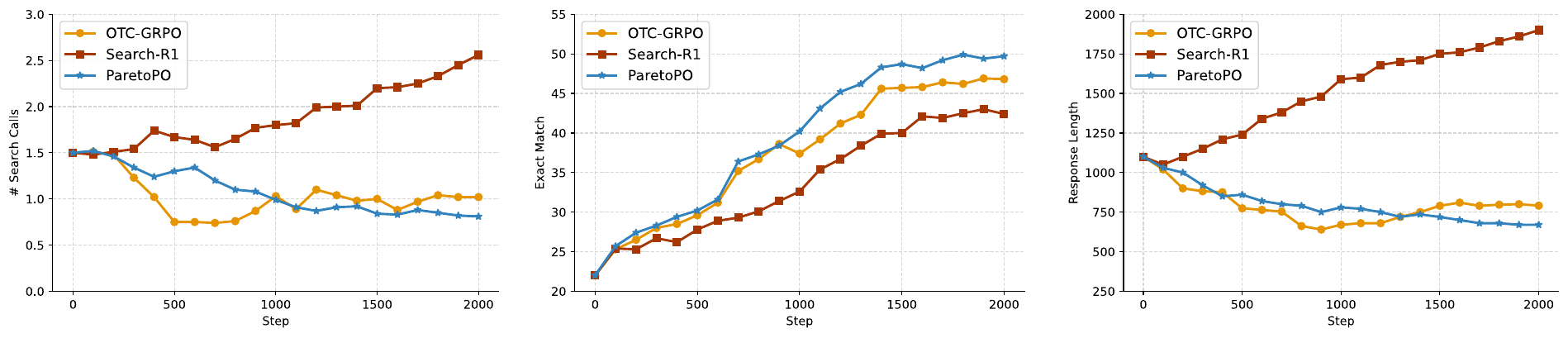}
    \caption{\textbf{Left}: Changes of number of search calls during the training; \textbf{Middle}: Changes of exact match (EM) score during the training; and \textbf{Right}: Changes of response length during the training.
    } 
    \label{fig:training_dynamics}
    \vspace{-0.3cm}
\end{figure*}

\subsubsection{Training Dynamics Analysis}

To better understand how different optimization objectives shape agent behavior during
learning, we analyze training dynamics on NQ using Qwen2.5-3B (Base).
Figure~\ref{fig:training_dynamics} tracks the evolution of three quantities over training
steps: number of search calls, exact match (EM), and response length.

As shown in Figure~\ref{fig:training_dynamics} (Left), Search-R1 increasingly relies on
external search, with the average number of search calls rising from about $1.5$ to over
$2.5$.
In contrast, ParetoPO steadily reduces tool usage and converges to around $0.9$ calls,
while OTC-GRPO exhibits an initial decrease followed by oscillations around $1.0$.
Overall, ParetoPO achieves the lowest tool usage, indicating that the
Pareto-style objective effectively discourages unnecessary search.

Figure~\ref{fig:training_dynamics} (Middle) shows that all methods improve in EM during
training, but ParetoPO achieves faster gains and a higher final plateau.
Notably, reduced tool usage under ParetoPO does not compromise accuracy; instead, the
policy becomes both more efficient and more effective.

Finally, Figure~\ref{fig:training_dynamics} (Right) highlights clear differences in
generation behavior.
Search-R1 produces increasingly long responses as training progresses, suggesting a
tendency toward verbose reasoning and higher inference cost.
ParetoPO, by contrast, steadily shortens its responses and stabilizes around $650$
tokens.
This indicates that ParetoPO mitigates length inflation and promotes more concise
generation alongside improved accuracy.

\begin{table}[t]
\centering
\small
\caption{Analysis of pref-defined weights in scalarization.}
\label{tab:hyper-parameter}
\begin{tabular}{l  c c  c c}
    \toprule
    \multirow{2.5}{*}{$(w_1, w_2)$} & \multicolumn{2}{c}{\textbf{MATH500}} & \multicolumn{2}{c}{\textbf{NQ}} \\
    \cmidrule(r){2-3}\cmidrule(r){4-5}
    & \textbf{EM} & \textbf{\#Tool} & \textbf{EM} & \textbf{\#Tool} \\
    \midrule
    (0.2, 0.8) & 71.5 & 0.6 & 41.3 & 0.6 \\
    (0.8, 0.2) & 78.7 & 1.1 & 47.6 & 1.0 \\
    (0.4, 0.6) & 78.2 & 0.8 & 47.5 & 0.8 \\
    (0.6, 0.4) & 80.0 & 0.9 & 48.0 & 0.9 \\
    (0.5, 0.5) & 79.2 & 0.9 & 47.6 & 0.8 \\
    \bottomrule
\end{tabular}
\vspace{-0.5cm}
\end{table}

\subsubsection{Hyper-parameter Analysis} 

To evaluate the robustness of ParetoPO to the choice of scalarization weights,
we conduct a sensitivity analysis by varying $(w_1,w_2)$, where $w_1$ emphasizes
task accuracy and $w_2$ emphasizes tool efficiency.
We evaluate on MATH500 with Qwen2.5-Math-1.5B and on NQ with Qwen2.5-3B (Base);
results are summarized in Table~\ref{tab:hyper-parameter}.

Across both settings, performance remains stable over a broad range of weight choices.
As expected, increasing $w_2$ leads to fewer tool calls at the cost of lower EM, while
shifting emphasis toward accuracy improves EM with only a moderate increase in tool usage.
Notably, the best accuracy is consistently achieved by moderately accuracy-biased but
non-extreme weights.
For example, $(w_1,w_2)=(0.6,0.4)$ yields the highest EM on both MATH500 and NQ while
maintaining low tool usage, and balanced weights such as $(0.5,0.5)$ perform competitively
with similar efficiency.

Overall, these results indicate that ParetoPO is robust to the choice of scalarization
weights and does not rely on narrow hyperparameter tuning.
A moderate emphasis on accuracy (e.g., $w_1\in[0.5,0.6]$) provides a strong default choice.

\subsubsection{Pareto Frontiers Analysis}

Figure~\ref{fig:pareto_front} visualizes the Pareto frontiers between task accuracy ($r_{task}$)and tool efficiency ($r_{tool}$, Eq.~\eqref{eq:tool-score}) for ParetoPO and competing baselines. Each point on the curves corresponds to a model checkpoint at a different training step, reflecting how the trade-off evolves during optimization. 

Across both benchmarks, ParetoPO traces a substantially broader and smoother Pareto frontier over training. In the early and intermediate stages, ParetoPO naturally explores efficiency-focused solutions with high tool efficiency while maintaining competitive task accuracy. As training progresses, it transitions smoothly toward balanced operating points, achieving strong task accuracy with only moderate efficiency loss. In contrast, baseline methods exhibit more polarized behavior: OTC-GRPO remains largely confined to efficiency-oriented regions, while accuracy-focused baselines (e.g., ToRL-GRPO) improve task accuracy at the cost of a pronounced drop in tool efficiency.

\begin{figure}[t]
    \centering
    \small
    \includegraphics[width=0.45\textwidth]{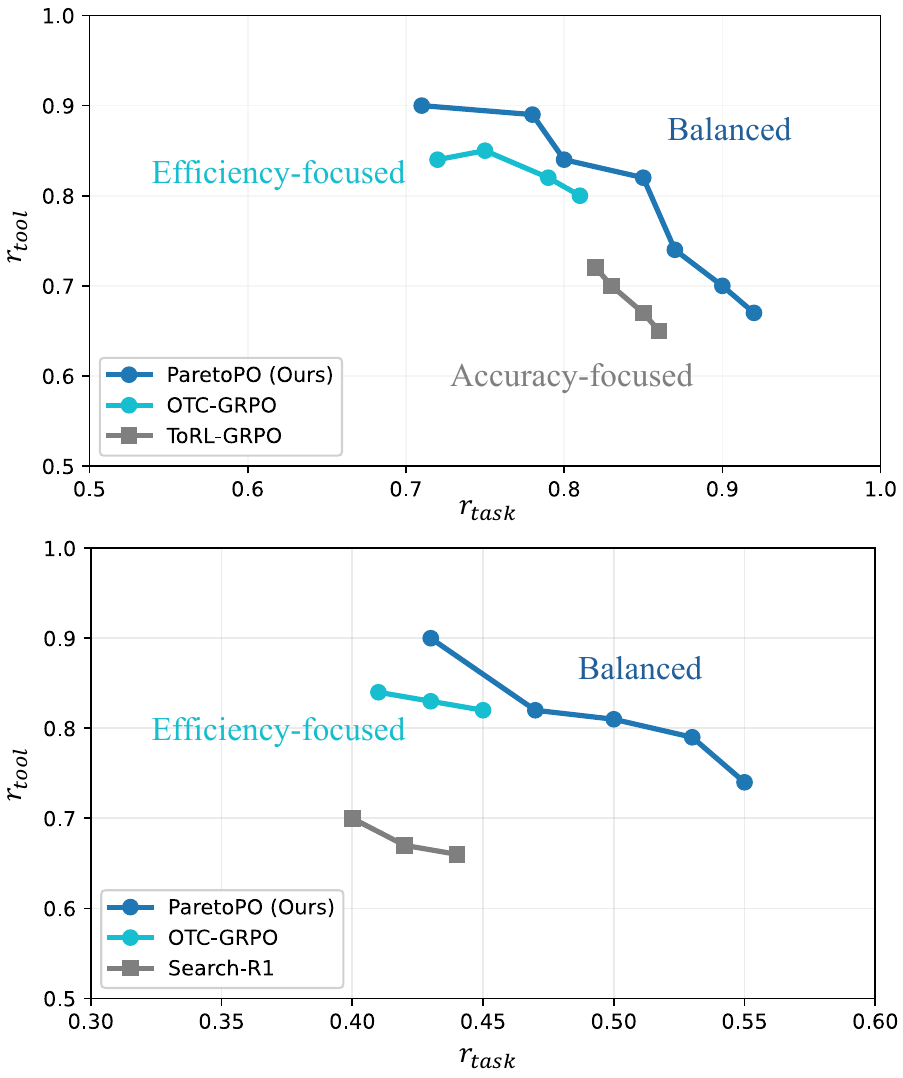}
    \caption{\textbf{Top}: Qwen2.5-Math-7B on Math500; \textbf{Bottom}: Qwen2.5-7B (Base) on Natural Questions.
    } 
    \label{fig:pareto_front}
    \vspace{-0.3cm}
\end{figure}
\section{Related Work}

\paratitle{Multi-Objective Reinforcement Learning.} Multi-objective RL addresses problems with multiple conflicting criteria, seeking policies that achieve a desirable trade-off among objectives. Early work~\citep{dynamic_weighting,gapo} established that optimizing each objective independently typically produces very different policies, necessitating methods to handle simultaneous optimization. A variety of MORL techniques have been proposed, including scalarization approaches~\citep{li2025optimizing}, post-hoc policy mixing~\citep{rame2023rewarded,wang2025mpo}, and multi-policy architectures~\citep{callaghan2025extending}. A common approach is linear scalarization of rewards, but it is well-known that any fixed linear weight vector can miss Pareto-optimal solutions on concave regions of the Pareto front~\citep{dynamic_weighting}. To capture the full Pareto set, some methods~\citep{wang2024conditional,zhong2024panacea} train a set of policies, each for a different weight or preference setting (e.g., via continuation methods or preference-conditioned policies). While multi-policy approaches (including evolutionary algorithms) can approximate the entire Pareto front~\citep{callaghan2025extending}, they incur significant computational cost by maintaining many policies or a large population of solutions.

\paratitle{LLM Alignment and Efficiency.} Recently, there has been growing interest in aligning LLMs to multiple objectives beyond pure accuracy or reward scores. Several works have integrated secondary objectives such as response length~\citep{cheng2025incentivizing}, token efficiency~\cite{zhang2025tearag}, factuality~\citep{lireasoning}, or safety~\citep{li2025optimizing} into the training of LLMs. For example, on-policy methods like PPO~\citep{schulman2017proximal} or GRPO~\citep{deepseek-math} have been used to fine-tune LLMs not only for correctness but also for more concise or relevant outputs. These efforts, however, often rely on heuristic reward shaping or fixed-weight combinations (e.g., adding a term penalizing length). Such static approaches allow limited control: although some frameworks enable post hoc trade-off adjustment (e.g., by training conditional policies or via interpolation between separately trained models~\citep{wang2024conditional,zhong2024panacea}), the training process itself does not actively seek Pareto-optimal balance. There is evidence that static weighting can lead to inefficient learning, i.e., certain easier objectives saturate early and yet continue to consume learning capacity if their weights remain fixed~\citep{dynamic_weighting}. In contrast, our work explicitly targets Pareto-optimal trade-offs during training,
rather than optimizing a fixed scalarized objective.

\section{Conclusion}
\label{conclusion}

We proposed ParetoPO, a two-stage framework for multi-objective policy optimization
that combines global exploration with Pareto-consistent local refinement.
Stage~1 employs hypervolume-guided dynamic scalarization to explore diverse trade-offs
and asymptotically cover the supported Pareto-optimal set, equivalently the supported
convex hull of achievable objectives.
Stage~2 performs rank-based policy optimization using Pareto ranking, yielding
scale-free and stable updates.
We establish that Stage~2 converges to Pareto-ascent stationary points under standard
stochastic approximation conditions, and that the bounded second moment of the
rank-based advantage contributes to robustness in high-variance settings.
Extensive experiments demonstrate that ParetoPO effectively improves task performance
while promoting tool-use efficiency.

\section*{Acknowledgements}

This work was fully or partially supported by the Department of Data Science Postdoctoral Fellowship Scheme at City University of Hong Kong. This research was also partially supported by National Natural Science Foundation of China (No.62502404), Hong Kong Research Grants Council (Research Impact Fund No.R1015-23, Collaborative Research Fund No.C1043-24GF, General Research Fund No. 11218325), Institute of Digital Medicine of City University of Hong Kong (No.9229503), and Huawei (Huawei Innovation Research Program, Huawei Young Scholar Funding Scheme).

\section*{Impact Statement}

This work proposes a multi-objective optimization framework for balancing competing objectives in reinforcement learning. The method may support more efficient and controllable decision-making systems, particularly in settings involving resource usage or tool invocation. As a general optimization technique, its societal impact depends on the choice of objectives and deployment context. Careful objective design and monitoring are therefore necessary to ensure responsible use.

\bibliography{icml2026}
\bibliographystyle{icml2026}


\newpage
\appendix

\section{Details of Datasets}
\label{app:dataset_details}

We evaluate mathematical reasoning performance on five competition-level benchmarks: MATH500, AIME2024, AIME2025, OlympiadBench, and AMC23.

$\bullet$~MATH500~\cite{hendrycksmath2021} consists of $500$ problems sampled, covering topics such as algebra, geometry, number theory, and combinatorics.

$\bullet$~AIME2024~\cite{aime2024} and AIME2025~\cite{aime2025} contain $30$ and $30$ problems respectively, drawn from recent American Invitational Mathematics Examination contests and featuring short-answer numerical questions that require precise multi-step derivations.

$\bullet$~OlympiadBench~\cite{he2024olympiadbench} adopts he \texttt{OE\_TO\_maths\_en\_COMP} subset including $674$ problems and spanning national and international mathematics olympiads, with a strong emphasis on symbolic manipulation and proof-oriented reasoning.

$\bullet$~AMC23~\cite{amc2023} contains $40$ problems from the AMC 2023 competition, representing a mix of intermediate-to-advanced difficulty levels.

All mathematical reasoning tasks allow the agent to invoke a Python interpreter as an external tool for intermediate computation. A solution is considered correct if the predicted answer exactly matches the ground-truth solution after normalization.

For multi-hop QA, we use Natural Questions (NQ) and HotpotQA, both of which require retrieving and aggregating evidence from multiple documents.

$\bullet$~Natural Questions (NQ)~\cite{kwiatkowski2019natural} contains $7,830$ open-domain questions derived from real user queries, where answering often requires identifying and synthesizing information from retrieved passages.

$\bullet$~HotpotQA~\cite{hotpotqa} consists of $7,405$ questions designed to explicitly require multi-hop reasoning across two or more supporting documents.

In these tasks, the agent interacts with a retrieval system as an external tool to search for relevant evidence. 


\section{Training and Implementation Details}
\label{app:implementation}

We conduct experiments using two types of models: Qwen-2.5-Math-1.5B/7B~\cite{yang2024qwen25math} and
Qwen-2.5-3B/7B-Base~\cite{qwen2025qwen25technicalreport}.
For mathematical reasoning tasks, we adopt the training set and code from ToRL~\cite{torl}. For multi-hop QA tasks, we adopt the training set and code from Search-R1~\cite{search-r1}, which merges the training sets of NQ and HotpotQA to form a unified dataset. For retrieval, we use the 2018 Wikipedia dump~\cite{karpukhin2020dense} as the knowledge source and employ E5~\cite{wang2022text} as the retriever. To ensure fair comparison, we set the number of retrieved passages to $3$ across all retrieval-based method. During training, the hyper-parameters $\alpha$ (used for $r_{tool}$ calculation in Eq.~\eqref{eq:tool-score}), $\gamma$ (used for hypervolume gain computation in Eq.~\eqref{eq-hypervolume-gain}), $\beta$ (used for advantage calculation in Eq.~\eqref{eq-trajectory-advantage}) are set to $0.7$, $0.5$, and $0.5$, respectively. Policy optimization is performed using the GRPO algorithm. We use a learning rate of $1e-6$, a batch size of $1024$, $8$ sampled trajectories per prompt. We adopt optimization strategies from DAPO~\cite{yu2025dapo}. We apply Stage~1 for one epoch training followed by Stage~2 training until convergence. The initial weight $\bm{w}$ is set to $(0.6, 0.4)$ for $r_{task}, r_{tool}$. The implementation of baselines follows their original codes and papers. Algorithm~\ref{alg:paretopo} presents the workflow of our method.

\begin{algorithm}[h]
\caption{\textsc{ParetoPO}: Two-Stage Multi-Objective Policy Optimization}
\label{alg:paretopo}
\begin{algorithmic}[1]
\REQUIRE Training set $\mathcal{D}_{\text{train}}$, validation set $\mathcal{D}_{\text{val}}$, initial policy $\pi_{\theta_0}$, weight vector $\bm{w}$, max steps $T$
\ENSURE Final policy $\pi_\theta$
\STATE Reference point $\bm{r}_{\theta_0} = \text{Evaluate}(\pi_{\theta_0}, \mathcal{D}_{\text{val}})$
\STATE Initialize $\mathcal{B} \leftarrow \{\bm{r}_{\theta_0}\}$, $\Delta \overline{\mathrm{HV}}_0 \leftarrow 0$, $r_{pareto} \leftarrow 1$
\STATE \textcolor{teal}{$\rhd$ Stage 1: Dynamic Scalarization Phase}
\FOR{$t = 1$ to $T_1$}
    \STATE Sample mini-batch $\mathcal{D}_b \subset \mathcal{D}_{\text{train}}$
    \FORALL{query $q \in \mathcal{D}_b$}
        \STATE Generate $g$ trajectories $\{\tau_i\}_{i=1}^g \sim \pi_{\theta_{t-1}}(\cdot|q)$
        \STATE Compute rewards $\bm{r} = (r_{task}, r_{tool})$ for each $\tau_i$
        \STATE Compute scalar reward $r_w = \bm{w}^\top \bm{r}$
        \STATE $\tilde{r}_w \leftarrow r_{pareto} \cdot r_w$
    \ENDFOR
    \STATE Update $\pi_{\theta_{t-1}}$ via GRPO using $\tilde{r}_w$
    \STATE $\bm{r}_{\theta_t} \leftarrow \text{Evaluate}(\pi_{\theta_t}, \mathcal{D}_{\text{val}})$
    \STATE Update $\Delta \overline{\mathrm{HV}}_t = \gamma \cdot \Delta \overline{\mathrm{HV}}_{t-1} + (1 - \gamma) \cdot \Delta \mathrm{HV}_t$
    \STATE $r_{\text{pareto}} \leftarrow 0.5 + 1.5 \cdot \tanh(\Delta \overline{\mathrm{HV}}_t)$
    \IF{$\bm{r}_{\theta_t}$ is non-dominated in $\mathcal{B}$}
        \STATE $\mathcal{B} \leftarrow \mathcal{B} \cup \{\bm{r}_{\theta_t}\}$
    \ENDIF
\ENDFOR
\STATE \textcolor{teal}{$\rhd$ Stage 2: Pareto Ranking Phase}
\FOR{$t = T_1+1$ to $T$}
    \STATE Sample mini-batch $\mathcal{D}_b \subset \mathcal{D}_{\text{train}}$
    \FORALL{query $q \in \mathcal{D}_b$}
        \STATE Generate $g$ trajectories $\{\tau_i\}_{i=1}^g \sim \pi_{\theta_{t-1}}(\cdot|q)$
        \STATE Compute rewards $\bm{r} = (r_{task}, r_{tool})$ for each $\tau_i$
        \STATE Perform Pareto ranking on $\{\tau_i\}_{i=1}^g$ using $\bm{r}$ to assign rank $\rho$ for each $\tau_i$
        \STATE Normalize reward $\hat{r}_w = \frac{r_w - r_{min}}{r_{max} - r_{min}}$
        \STATE Compute advantage $A_i = A_{{base},\rho} + \beta \cdot (\hat{r}_w - 0.5)$
    \ENDFOR
    \STATE Update policy $\pi_{\theta_t}$ via GRPO using $A_i$
\ENDFOR
\end{algorithmic}
\end{algorithm}

\onecolumn

\section{Proof of Proposition~\ref{prop-convex}: Convex-Hull Coverage of Stage 1}
\label{app:stage1_convex_hull}

\convexhullProp*

\paragraph{Setup.}
Let $\Pi$ be the policy class and define the achievable expected return set
\[
\mathcal{Y} \triangleq \{\bm{J}(\pi)\in\mathbb{R}^M : \pi\in\Pi\}.
\]
Let $\mathcal{C}\triangleq \mathrm{conv}(\mathcal{Y})$ be its convex hull.
Stage~1 produces a sequence of policies $\{\pi_t\}_{t\ge1}$ and we denote the discovered
return vectors by $\bm{y}_t\triangleq \bm{J}(\pi_t)$. Let
\[
\mathcal{S}_T \triangleq \{\bm{y}_t\}_{t=1}^T,\qquad
\mathcal{C}_T \triangleq \mathrm{conv}(\mathcal{S}_T).
\]

For any weight vector $\bm{w}\in\Delta^M$ (the probability simplex), define the support functions
\[
h_{\mathcal{C}}(\bm{w})\triangleq \sup_{\bm{y}\in\mathcal{C}} \bm{w}^\top \bm{y},
\qquad
h_{\mathcal{C}_T}(\bm{w})\triangleq \sup_{\bm{y}\in\mathcal{C}_T} \bm{w}^\top \bm{y}.
\]
Since $\mathcal{C}$ and $\mathcal{C}_T$ are convex hulls, we also have the equivalent forms
\begin{equation}
\label{eq:support_equiv}
h_{\mathcal{C}}(\bm{w})=\sup_{\bm{y}\in\mathcal{Y}} \bm{w}^\top \bm{y},
\qquad
h_{\mathcal{C}_T}(\bm{w})=\max_{\bm{y}\in\mathcal{S}_T} \bm{w}^\top \bm{y}.
\end{equation}

\begin{assumption}[\textbf{Preference Coverage and Approximate Scalar Optimization}]\label{ass:pref_cover_app}
There exists a sequence $\varepsilon_T \to 0$ as $T\to\infty$ such that for every $\bm{w}\in\Delta^M$,
there exists an index $t\le T$ satisfying
\begin{equation}
\label{eq:approx_support_point}
\bm{w}^\top \bm{J}(\pi_t) \;\ge\; \sup_{\pi\in\Pi} \bm{w}^\top \bm{J}(\pi) \;-\; \varepsilon_T.
\end{equation}
Equivalently, Stage~1 finds an $\varepsilon_T$-approximate maximizer of each linear scalarization
direction $\bm{w}$ within its first $T$ evaluations.
\end{assumption}

\begin{proposition}\label{prop:convex_hull_app}
Under Assumption~\ref{ass:pref_cover_app}, the proposition~\ref{prop-convex} is equal to 
\[
\sup_{\bm{w}\in\Delta^M}\bigl|h_{\mathcal{C}}(\bm{w})-h_{\mathcal{C}_T}(\bm{w})\bigr|
\;\le\; \varepsilon_T,
\]
and consequently $\mathcal{C}_T$ converges to $\mathcal{C}$ in Hausdorff distance as $T\to\infty$.
In particular, Stage~1 asymptotically covers the convex hull of the Pareto front.
\end{proposition}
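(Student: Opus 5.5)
The plan is to prove the support-function inequality first by a two-sided sandwich using only Assumption~\ref{ass:pref_cover_app} and the equivalent forms \eqref{eq:support_equiv}. After that I will pass to Hausdorff distance through the standard support-function characterization of the Hausdorff metric between compact convex sets.

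\emph{Step 1 (upper side).} Since $\mathcal{S}_T\subseteq\mathcal{Y}$, we have $\mathcal{C}_T\subseteq\mathcal{C}$. Hence $h_{\mathcal{C}_T}(\bm w)\le h_{\mathcal{C}}(\bm w)$ for every $\bm w$, and in particular on $\Delta^M$. The difference $h_{\mathcal{C}}-h_{\mathcal{C}_T}$ is therefore nonnegative, and the absolute value in the claim can be dropped.

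\emph{Step 2 (lower side).} Fix $\bm w\in\Delta^M$ and take the index $t\le T$ supplied by \eqref{eq:approx_support_point}. By \eqref{eq:support_equiv},
\[
h_{\mathcal{C}_T}(\bm w)=\max_{\bm y\in\mathcal{S}_T}\bm w^\top\bm y\ \ge\ \bm w^\top\bm J(\pi_t)\ \ge\ \sup_{\bm y\in\mathcal{Y}}\bm w^\top\bm y-\varepsilon_T = h_{\mathcal{C}}(\bm w)-\varepsilon_T .
\]
Combining this with Step~1 gives $0\le h_{\mathcal{C}}(\bm w)-h_{\mathcal{C}_T}(\bm w)\le\varepsilon_T$. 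The bound is uniform in $\bm w$ because $\varepsilon_T$ does not depend on $\bm w$, so taking the supremum over $\Delta^M$ yields the displayed inequality. Since $\varepsilon_T\to0$, this also gives \eqref{eq:support_conv_simplex}.

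\emph{Supported points.} For the supported-point claim, let $\bm y^\ast\in\mathcal{Y}$ maximize $\bm w^\top\bm y$. Then $\max_{\bm y\in\mathcal{S}_T}\bm w^\top\bm y\ge \bm w^\top\bm y^\ast-\varepsilon_T$, so $\mathcal{S}_T$ contains points whose $\bm w$-value is within $\varepsilon_T$ of that of $\bm y^\ast$.

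\emph{Step 3 (Hausdorff).} This is the main obstacle. I would use two facts:
\begin{itemize}
\item returns are bounded, because rewards lie in $[0,1]$ and the horizon is finite, so $\mathcal{C}$ and $\mathcal{C}_T$ are compact convex sets (closures if needed);
\item for compact convex $K,L$, one has $d_H(K,L)=\sup_{\|\bm u\|=1}|h_K(\bm u)-h_L(\bm u)|$.
\end{itemize}
Steps 1--2 transfer to every direction in the nonnegative orthant by positive homogeneity, since $h(\lambda\bm w)=\lambda h(\bm w)$ and $\|\bm u\|_1\le\sqrt M$ for unit $\bm u$. This gives a bound of $\sqrt M\,\varepsilon_T$ on those directions.

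Directions with negative components are not controlled by Assumption~\ref{ass:pref_cover_app} as stated. For these directions, the argument of Step~2 requires exactly that Stage~1 also produces near-maximizers of $\bm u^\top\bm J$. I would therefore read the hypothesis of Proposition~\ref{prop-convex} ("explores preference directions densely and approximately optimizes each visited scalarization") as supplying \eqref{eq:approx_support_point} uniformly over unit directions $\bm u$. Under that reading, Step~2 repeated verbatim gives $\sup_{\|\bm u\|=1}|h_{\mathcal{C}}-h_{\mathcal{C}_T}|\le\varepsilon_T$, and hence $d_H(\mathcal{C}_T,\mathcal{C})\le\varepsilon_T\to0$. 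The final sentence about covering the convex hull of the Pareto front then follows, because that hull is contained in $\mathcal{C}$ and is approximated within $\varepsilon_T$ by $\mathcal{C}_T$.

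The delicate point is stating this extension explicitly. The simplex-only bound does not by itself imply Hausdorff convergence: the lower-left parts of $\mathcal{C}$ can be missed.
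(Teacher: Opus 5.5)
Your Steps~1 and~2 are the paper's proof of the support-function bound. One side uses $\mathcal{C}_T\subseteq\mathcal{C}$. The other uses the $\varepsilon_T$-maximizer from Assumption~\ref{ass:pref_cover_app} together with $h_{\mathcal{C}_T}(\bm{w})=\max_{\bm{y}\in\mathcal{S}_T}\bm{w}^\top\bm{y}$.

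You depart from the paper in Step~3, and there you are more careful than the paper. Its Step~3 only restates the value-level bound as ``coverage of the supported boundary'' and never proves the asserted Hausdorff convergence. Your diagnosis is correct: the claim is false under the stated assumption. Take $M=2$, $\mathcal{Y}=\{(0,0),(1,1)\}$ and $\bm{J}(\pi_t)=(1,1)$ for all $t$. The assumption holds with $\varepsilon_T=0$, while $d_H(\mathcal{C}_T,\mathcal{C})=\sqrt{2}$. The last sentence of the statement also fails literally, and not only through dominated regions. With $\mathcal{Y}=\{(1,0),(0,1),(0.4,0.4)\}$ and $\mathcal{S}_T=\{(1,0),(0,1)\}$, again $\varepsilon_T=0$. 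Yet the unsupported Pareto point $(0.4,0.4)$ of the hull of the front stays at distance $0.1\sqrt{2}$ from $\mathcal{C}_T$.

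What I would change is your remedy. Reading the hypothesis as holding for all unit directions contradicts the explicit ``over the simplex'' in Proposition~\ref{prop-convex} and Assumption~\ref{ass:pref_cover_app}. Stage~1 also could not deliver it. It only optimizes positive multiples of $\bm{w}^\top\bm{r}$ with $\bm{w}$ in the simplex; $r_{pareto}\ge 0.5$ because hypervolume gains are nonnegative. So it never optimizes a scalarization that rewards lowering an objective. Keep the all-directions version only as an explicitly labelled sufficient condition for Hausdorff convergence.

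Under the stated assumption, prove the correct replacement instead, namely coverage up to dominance: every $\bm{y}\in\mathcal{C}$ satisfies $\bm{y}\le\bm{c}+\varepsilon_T\mathbf{1}$ for some $\bm{c}\in\mathcal{C}_T$. This needs only your simplex bound. The set $D=\mathcal{C}_T+\varepsilon_T\mathbf{1}-\mathbb{R}^M_+$ is closed and convex. If $\bm{y}\notin D$, a strictly separating $\bm{u}$ must be nonnegative, since otherwise $\sup_{\bm{z}\in D}\bm{u}^\top\bm{z}=+\infty$. After rescaling, $\bm{u}\in\Delta^M$ and $\bm{u}^\top\bm{y}>h_{\mathcal{C}_T}(\bm{u})+\varepsilon_T\ge h_{\mathcal{C}}(\bm{u})$, which contradicts $\bm{y}\in\mathcal{C}$.

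Equivalently, the downward closures $\mathcal{C}_T-\mathbb{R}^M_+$ and $\mathcal{C}-\mathbb{R}^M_+$ are within $\varepsilon_T$ in $\ell_\infty$-Hausdorff distance. In Euclidean distance they are within $\sqrt{M}\,\varepsilon_T$, which is exactly what your orthant-homogeneity bound gives, since both sets have support function $+\infty$ off the nonnegative orthant. This is the precise sense in which Stage~1 covers the supported part of the front.
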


\begin{proof}
We prove the bound on support functions first.

\textbf{Step 1: Upper bound.}
Since $\mathcal{S}_T\subseteq \mathcal{Y}$, taking convex hulls yields
$\mathcal{C}_T\subseteq \mathcal{C}$.
Therefore for any $\bm{w}\in\Delta^M$,
\[
h_{\mathcal{C}_T}(\bm{w})=\sup_{\bm{y}\in\mathcal{C}_T} \bm{w}^\top \bm{y}
\;\le\;
\sup_{\bm{y}\in\mathcal{C}} \bm{w}^\top \bm{y}
= h_{\mathcal{C}}(\bm{w}).
\]
Hence $h_{\mathcal{C}}(\bm{w})-h_{\mathcal{C}_T}(\bm{w})\ge 0$.

\textbf{Step 2: Lower bound via approximate support points.}
Fix any $\bm{w}\in\Delta^M$. By Assumption~\ref{ass:pref_cover_app}, there exists $t\le T$ such that
\[
\bm{w}^\top \bm{J}(\pi_t) \;\ge\; \sup_{\pi\in\Pi} \bm{w}^\top \bm{J}(\pi) - \varepsilon_T.
\]
Using Equation~\eqref{eq:support_equiv} and the fact that $\bm{J}(\pi_t)\in\mathcal{S}_T$, we have
\[
h_{\mathcal{C}_T}(\bm{w})
= \max_{\bm{y}\in\mathcal{S}_T} \bm{w}^\top \bm{y}
\;\ge\;
\bm{w}^\top \bm{J}(\pi_t)
\;\ge\;
h_{\mathcal{C}}(\bm{w}) - \varepsilon_T.
\]
Rearranging gives $h_{\mathcal{C}}(\bm{w})-h_{\mathcal{C}_T}(\bm{w})\le \varepsilon_T$.
Combining with Step 1 yields
\[
0 \le h_{\mathcal{C}}(\bm{w})-h_{\mathcal{C}_T}(\bm{w})\le \varepsilon_T
\quad\forall \bm{w}\in\Delta^M,
\]
and hence
\[
\sup_{\bm{w}\in\Delta^M}\bigl|h_{\mathcal{C}}(\bm{w})-h_{\mathcal{C}_T}(\bm{w})\bigr|
= \sup_{\bm{w}\in\Delta^M}\bigl(h_{\mathcal{C}}(\bm{w})-h_{\mathcal{C}_T}(\bm{w})\bigr)
\le \varepsilon_T.
\]

\textbf{Step 3: Support-function convergence implies supported-front coverage.}
Equation~\eqref{eq:support_conv_simplex} yields uniform convergence of the support values
over all nonnegative preference directions $\bm{w}\in\Delta^M$.

For any fixed $\bm{w}\in\Delta^M$, any maximizer of $\bm{w}^\top \bm{y}$ over $\bm{y}\in\mathcal{C}$
is a \emph{supported} boundary point of $\mathcal{C}$, and every supported Pareto-optimal point
$\bm{y}^\star\in\mathcal{Y}$ is such a maximizer for some $\bm{w}\in\Delta^M$.

The uniform approximation
$h_{\mathcal{C}_T}(\bm{w})\ge h_{\mathcal{C}}(\bm{w})-\varepsilon_T$
implies that Stage~1 produces points whose convex combinations achieve
near-optimal support values in every such direction.

Therefore, Stage~1 asymptotically covers the supported hull boundary (equivalently, the supported
subset of the Pareto front), which is exactly the portion recoverable by linear scalarizations
with $\bm{w}\in\Delta^M$.
\end{proof}

\paratitle{Remark.}
The hypervolume-guided dynamic scalarization in Stage~1 is designed to encourage exploration of
diverse trade-off regions by prioritizing policies that increase the archive hypervolume.
Assumption~\ref{ass:pref_cover_app} formalizes this global exploration effect in a way that is
standard in multi-objective optimization: covering a dense set of preference directions and
approximately optimizing each corresponding scalarization yields convergence of the discovered
convex hull to that of the achievable set.

\section{Proof of Lemma~\ref{lemma-expected-gradient}: Expected Gradient as a Pareto-Ascent Direction}
\label{app:expected_gradient}

\gradientLemma*

\begin{proof}
We make the batch dependence explicit. Note that $A_\sigma(\tau_i;\tau_{-i},\xi)$ is not treated as a deterministic
function of $\theta$, but as a bounded statistic evaluated on i.i.d. samples
from $\pi_\theta$.
Let $\tau_{1:K}=(\tau_1,\dots,\tau_K)$ be i.i.d. trajectories sampled from $\pi_\theta$,
and let the (smoothed) rank-based advantage of $\tau_i$ be
$A_\sigma(\tau_i;\tau_{-i},\xi)$, where $\tau_{-i}$ denotes the remaining $K-1$ trajectories
and $\xi$ the Gumbel noise.
Define the batch objective
\[
W_{\sigma,K}(\theta)
\triangleq
\mathbb E_{\tau_{1:K}\sim\pi_\theta,\xi}\!\left[
F_\sigma(\tau_{1:K},\xi)
\right],
\qquad
F_\sigma(\tau_{1:K},\xi)\triangleq \frac1K\sum_{i=1}^K A_\sigma(\tau_i;\tau_{-i},\xi).
\]
For any fixed $\sigma>0$, $A_\sigma$ (hence $F_\sigma$) is measurable and uniformly bounded.
Moreover, the joint trajectory density factorizes as
$p_\theta(\tau_{1:K})=\prod_{i=1}^K p_\theta(\tau_i)$, and $\xi$ is independent of $\theta$.
Therefore, by the likelihood-ratio (score-function) identity and dominated convergence,
\begin{align}
\nabla_\theta W_{\sigma,K}(\theta)
&=
\nabla_\theta \int F_\sigma(\tau_{1:K},\xi)\,p_\theta(\tau_{1:K})\,p(\xi)\,d\tau_{1:K}\,d\xi \nonumber\\
&=
\int F_\sigma(\tau_{1:K},\xi)\,\nabla_\theta p_\theta(\tau_{1:K})\,p(\xi)\,d\tau_{1:K}\,d\xi \nonumber\\
&=
\int F_\sigma(\tau_{1:K},\xi)\,p_\theta(\tau_{1:K})
\left(\sum_{j=1}^K \nabla_\theta\log p_\theta(\tau_j)\right)\,p(\xi)\,d\tau_{1:K}\,d\xi \nonumber\\
&=
\mathbb E_{\tau_{1:K}\sim\pi_\theta,\xi}\!\left[
F_\sigma(\tau_{1:K},\xi)\,
\sum_{j=1}^K \nabla_\theta\log \pi_\theta(\tau_j)
\right].
\label{eq:batch_pg_identity}
\end{align}
Eq.~\eqref{eq:batch_pg_identity} is the correct policy-gradient identity for our
rank-based (batch-coupled) objective and does not require $A_\sigma$ to be independent of $\theta$.
We denote the resulting mean-field vector field by
\[
\bm g_{\sigma,K}(\theta)\triangleq \nabla_\theta W_{\sigma,K}(\theta).
\]

\paratitle{Connection to Practical Estimator.}
In implementation, we use the per-trajectory estimator
$\hat{\bm g}_b(\theta)=\frac1K\sum_{i=1}^K A_\sigma(\tau_i;\tau_{-i},\xi)\nabla_\theta\log\pi_\theta(\tau_i)$.
This corresponds to replacing the shared batch scalar $F_\sigma(\tau_{1:K},\xi)$ in
\eqref{eq:batch_pg_identity} by individual advantages.
Since each $A_\sigma(\tau_i;\tau_{-i},\xi)$ changes by at most $O(1/K)$ under resampling a single
trajectory and is uniformly bounded, the difference between the two mean fields is $O(1/K)$,
and becomes negligible as $K$ grows (formal bound provided below).
Finally, as $\sigma\to 0$, $A_\sigma(\tau_i;\tau_{-i},\xi)\to A(\tau_i;\tau_{-i})$ a.s.,
and boundedness yields convergence of the corresponding mean field by dominated convergence. The following lemma formalizes the approximation induced by replacing
the shared batch scalar $F_\sigma$ with per-trajectory advantages in practice.
\end{proof}

\begin{lemma}[Batch-to-per-trajectory approximation]\label{lem:batch_approx}
Assume $|A_\sigma(\cdot)|\le A_{\max}$ and $\|\nabla_\theta\log\pi_\theta(\tau)\|\le G_{\max}$ a.s.
Let
\[
\bm g_{\sigma,K}(\theta)=\mathbb E\!\left[F_\sigma(\tau_{1:K},\xi)\sum_{j=1}^K\nabla_\theta\log\pi_\theta(\tau_j)\right],
\quad
\tilde{\bm g}_{\sigma,K}(\theta)=\mathbb E\!\left[\frac1K\sum_{i=1}^K A_\sigma(\tau_i;\tau_{-i},\xi)\nabla_\theta\log\pi_\theta(\tau_i)\right].
\]
Then $\|\bm g_{\sigma,K}(\theta)-\tilde{\bm g}_{\sigma,K}(\theta)\|\le \frac{2A_{\max}G_{\max}}{K}$.
\end{lemma}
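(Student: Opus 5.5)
The plan is to expand both mean fields into sums over pairs $(i,j)$, cancel the diagonal terms, and kill the off-diagonal terms with a leave-one-out decoupling plus the zero-mean property of the score. Write $s_j \triangleq \nabla_\theta\log\pi_\theta(\tau_j)$ and $A_i \triangleq A_\sigma(\tau_i;\tau_{-i},\xi)$. Substituting $F_\sigma=\frac1K\sum_i A_i$ into $\bm g_{\sigma,K}$ gives
\[
\bm g_{\sigma,K}(\theta)-\tilde{\bm g}_{\sigma,K}(\theta)
=\frac1K\sum_{j=1}^K\sum_{i\neq j}\mathbb E\!\left[A_i\, s_j\right],
\]
since the diagonal terms $i=j$ coincide exactly with $\tilde{\bm g}_{\sigma,K}$. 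So it suffices to control the cross terms $\mathbb E[A_i s_j]$ for $i\neq j$.

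For each cross term I would decouple $A_i$ from $\tau_j$. Let $\tau_j'$ be an independent copy of $\tau_j$, and let $A_i^{(-j)}$ be $A_i$ recomputed with $\tau_j$ replaced by $\tau_j'$ and the same Gumbel noise $\xi$. Then $A_i^{(-j)}$ is a function of $(\tau_{-j},\tau_j',\xi)$, all of which are independent of $\tau_j$. Hence
\[
\mathbb E\!\left[A_i^{(-j)} s_j\right]=\mathbb E\!\left[A_i^{(-j)}\right]\mathbb E[s_j]=0,
\]
because the score has zero mean. This is the step where boundedness of $A_\sigma$ is used, to justify the factorization and integrability. It follows that
\[
\mathbb E[A_i s_j]=\mathbb E\!\left[(A_i-A_i^{(-j)})\,s_j\right],
\qquad
\bigl\|\mathbb E[A_i s_j]\bigr\|\le G_{\max}\,\mathbb E\,\bigl|A_i-A_i^{(-j)}\bigr| .
\]
The whole difference is therefore bounded by
\[
\frac{G_{\max}}{K}\sum_{j=1}^K\mathbb E\!\left[\sum_{i\neq j}\bigl|A_i-A_i^{(-j)}\bigr|\right].
\]

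The remaining task is a bounded-difference estimate for the batch-coupled advantages. For the smoothed rank $R_\sigma(\tau_i)=1+\sum_{k\neq i}p_\sigma(\tau_k\succ\tau_i)$, swapping $\tau_j$ changes only the single summand $p_\sigma(\tau_j\succ\tau_i)\in[0,1]$. After normalizing the rank by the batch size, this means each $A_i$ with $i\neq j$ moves by $O(1/K)$, which is the heuristic stated just before the lemma.

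This step is the main obstacle. To reach the stated constant $2A_{\max}G_{\max}/K$, I need the aggregate sensitivity
\[
\sum_{i\neq j}\bigl|A_i-A_i^{(-j)}\bigr|\le \frac{2A_{\max}}{K}
\quad\text{for every } j .
\]
A per-coordinate $O(1/K)$ bound summed naively over $K-1$ indices only gives an $O(1)$ total. I would therefore first try to obtain the aggregate bound directly: since $\Phi$ is applied to a rank normalized by $K$, the total mass $\sum_i A_i$ can shift by at most $2A_{\max}$ (from removing one trajectory and inserting another), spread over $K$ normalized entries. If that fails, I would state the aggregate bounded-difference property explicitly as the operative assumption, since it is what the $O(1/K)$ remark implicitly uses. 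Once it holds, summing over $j$ and dividing by $K$ yields $\|\bm g_{\sigma,K}-\tilde{\bm g}_{\sigma,K}\|\le 2A_{\max}G_{\max}/K$.
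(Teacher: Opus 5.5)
Your reduction is sound. The diagonal terms cancel, giving $\bm g_{\sigma,K}-\tilde{\bm g}_{\sigma,K}=\frac1K\sum_j\sum_{i\neq j}\mathbb E[A_is_j]$, and the independent-copy decoupling correctly yields $\|\mathbb E[A_is_j]\|\le G_{\max}\,\mathbb E|A_i-A_i^{(-j)}|$. The gap is exactly the step you flag, and it cannot be filled. The aggregate bound $\sum_{i\neq j}|A_i-A_i^{(-j)}|\le 2A_{\max}/K$ does not follow from boundedness, and it is false for the paper's advantages. Resampling $\tau_j$ changes the summand $p_\sigma(\tau_j\succ\tau_i)\in[0,1]$ inside every $R_\sigma(\tau_i)$, so each of the $K-1$ other advantages can move by $\Theta(1)$, while $A_{\max}\approx N_{\text{rank}}\le K$. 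The left side is therefore $\Theta(K)$ against a target of $O(1)$. Normalizing ranks by $K$ rescales both sides by the same factor, so you remain off by a factor of $K$. Your fallback also conflates the net change $\bigl|\sum_i(A_i-A_i^{(-j)})\bigr|$ with the total variation $\sum_i|A_i-A_i^{(-j)}|$. A bound on the former does not control the latter. The claim that the net change is at most $2A_{\max}$ is itself unsupported, since every $A_i$ moves when $\tau_j$ is replaced.

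More fundamentally, the lemma is false under its stated hypotheses, so no sharper estimate will rescue it. Take $M=1$, a.s.\ distinct returns, and $A_i=K+1-\rho_i$, with $\rho_i$ the rank of $\tau_i$ in the batch. The smoothed choice $A_i=K+1-R_\sigma(\tau_i)$ (monotone $\Phi$) gives the same conclusion for small $\sigma$, because $p_\sigma(\tau_i\succ\tau_k)+p_\sigma(\tau_k\succ\tau_i)=1$ when $M=1$. In both cases $\sum_iA_i=K(K+1)/2$ is deterministic, so $\bm g_{\sigma,K}=\frac{K+1}{2}\,\mathbb E[\sum_js_j]=0$. By exchangeability, however, $\tilde{\bm g}_{\sigma,K}=\mathbb E[A_1s_1]=(K-1)\,\mathbb E[q(\tau)s(\tau)]$ with $q(\tau)=\mathbb P(G(\tau')<G(\tau))$. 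Concretely, let $\tau=(a,u)$ with $a\sim\mathrm{Bernoulli}(\mathrm{sigmoid}(\theta))$, $u\sim U[0,1]$, and return $a+u/2$. At $\theta=0$ one has $|s|=1/2$ and $A_{\max}=K$, so the claimed bound equals $1$. Yet the difference equals $(K-1)/8$, which exceeds it for $K\ge 10$. The paper gives no proof of this lemma, only the one-line remark that each $A_\sigma$ moves by $O(1/K)$; as you computed yourself, that remark yields only $O(A_{\max}G_{\max})$. What your argument genuinely proves is the conditional statement in which the aggregate sensitivity bound is added as a hypothesis. That is a different lemma, and its extra hypothesis is violated by rank-based advantages.
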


\section{Proof of Theorem~\ref{thm:bounded_variance}: Bounded Second Moment of Rank-Based Policy-Gradient Estimator}
\label{app:bounded_advantage}

\boundedThm*

\begin{proof}
Recall that the Pareto rank $\rho(\tau_i)$ of any trajectory $\tau_i$
within a batch $\tau_{1:K}$ satisfies
$\rho(\tau_i)\in\{1,\dots,N_{\text{rank}}\}$.
Therefore, the ordinal component of the rank-based advantage obeys
\[
1 \;\le\; N_{\text{rank}}-\rho(\tau_i)+1 \;\le\; N_{\text{rank}} .
\]
The within-rank score is normalized, $\hat r_w(\tau_i)\in[0,1]$, which implies
\[
-\tfrac{\beta}{2}
\;\le\;
\beta(\hat r_w(\tau_i)-0.5)
\;\le\;
\tfrac{\beta}{2}.
\]
Combining the two bounds yields the uniform pointwise bound
\[
|A(\tau_i;\tau_{-i})|
\;\le\;
N_{\text{rank}}+\tfrac{\beta}{2},
\]
which holds for every trajectory $\tau_i$ in any batch $\tau_{1:K}$,
regardless of the batch composition.

Define the per-trajectory policy-gradient estimator within a batch as
\[
\hat{\bm g}(\theta;\tau_i)
=
A(\tau_i;\tau_{-i})\,\nabla_\theta \log \pi_\theta(\tau_i).
\]
Then
\[
\|\hat{\bm g}(\theta;\tau_i)\|^2
=
A(\tau_i;\tau_{-i})^2
\;\|\nabla_\theta \log \pi_\theta(\tau_i)\|^2
\;\le\;
\bigl(N_{\text{rank}}+\tfrac{\beta}{2}\bigr)^2
\;\|\nabla_\theta \log \pi_\theta(\tau_i)\|^2 .
\]
Taking expectation with respect to $\tau_{1:K}\sim\pi_\theta$ and marginalizing
over $\tau_i$ gives
\[
\mathbb E\!\left[\|\hat{\bm g}(\theta;\tau_i)\|^2\right]
\;\le\;
\bigl(N_{\text{rank}}+\tfrac{\beta}{2}\bigr)^2
\;
\mathbb E\!\left[\|\nabla_\theta \log \pi_\theta(\tau_i)\|^2\right],
\]
which establishes~\eqref{eq:bounded_second_moment}.

Finally, for the minibatch estimator
\[
\hat{\bm g}_b(\theta)
=
\frac{1}{K}\sum_{i=1}^K \hat{\bm g}(\theta;\tau_i),
\]
boundedness of the second moment follows from Jensen's inequality and the
uniform bound above.
\end{proof}



\section{Proof of Theorem~\ref{theorem-convergence}: Convergence to Pareto-Ascent Stationarity}
\label{app:stage2_convergence}

\convergenceThm*

\begin{proof}
Recall the Stage~2 minibatch update
\[
\theta_{t+1}=\theta_t+\eta_t\,\hat{\bm g}_b(\theta_t),
\qquad
\hat{\bm g}_b(\theta)
=
\frac{1}{K}\sum_{i=1}^K
A(\tau_i;\tau_{-i})\,\nabla_\theta\log\pi_\theta(\tau_i),
\quad \tau_{1:K}\sim\pi_\theta .
\]
Define the associated batch mean-field vector field
\[
\bm g_{\sigma,K}(\theta)
\triangleq
\mathbb E_{\tau_{1:K}\sim\pi_\theta}\!\left[\hat{\bm g}_b(\theta)\right]
=
\nabla_\theta W_{\sigma,K}(\theta),
\]
where $W_{\sigma,K}$ is the (smoothed) batch objective in Lemma~\ref{lemma-expected-gradient}.
By Theorem~\ref{thm:bounded_variance}, the stochastic estimator $\hat{\bm g}_b(\theta_t)$ has bounded
second moment, and by assumption the step sizes satisfy the Robbins--Monro conditions
$\sum_t\eta_t=\infty$ and $\sum_t\eta_t^2<\infty$.
Together with the Lipschitz continuity of $\nabla J_m(\theta)$ (and hence local regularity of
$\bm g_{\sigma,K}(\theta)$), standard stochastic approximation theory (ODE method) implies that
$\{\theta_t\}$ converges almost surely to the internally chain transitive set of the ODE
\[
\dot\theta=\bm g_{\sigma,K}(\theta).
\]
Since $\bm g_{\sigma,K}(\theta)$ is the gradient of a smooth potential
$W_{\sigma,K}(\theta)$, the ODE $\dot\theta=\bm g_{\sigma,K}(\theta)$ is gradient-like,
and its internally chain transitive set consists only of stationary points
$\{\theta:\nabla_\theta W_{\sigma,K}(\theta)=0\}$.

Let $\theta^\star$ be a stationary point of this ODE, i.e., $\bm g_{\sigma,K}(\theta^\star)=0$.
Suppose, for contradiction, that $\theta^\star$ violates the Pareto-ascent stationarity condition.
Then there exists a direction $d$ such that
\[
\nabla_\theta J_m(\theta^\star)^\top d \ge 0 \quad \forall m,
\qquad
\bm g_{\sigma,K}(\theta^\star)^\top d > 0.
\]
However, Lemma~\ref{lemma-expected-gradient} shows that $\bm g_{\sigma,K}(\theta)$ is a Pareto-ascent
direction: for any $d$ satisfying $\nabla_\theta J_m(\theta)^\top d \ge 0$ for all $m$,
we must have $\bm g_{\sigma,K}(\theta)^\top d \ge 0$, and at a stationary point of the ODE we have
$\bm g_{\sigma,K}(\theta^\star)=0$, hence $\bm g_{\sigma,K}(\theta^\star)^\top d = 0$.
This contradicts $\bm g_{\sigma,K}(\theta^\star)^\top d > 0$.

Therefore, every limit point $\theta^\star$ satisfies~\eqref{eq:pareto_ascent_stationary},
and the proof is complete.
\end{proof}





\paratitle{Remark.}
Condition~\eqref{eq:pareto_ascent_stationary} corresponds to the standard
first-order Pareto stationarity condition in multi-objective optimization:
there exists no direction that simultaneously improves all objectives to
first order.
Equivalently, $\theta^\star$ admits no common ascent direction for
$\{J_m\}_{m=1}^M$.





\end{document}